\definecolor{c1}{HTML}{2F70AF} 
\definecolor{pink}{HTML}{747199}
\newcommand{\reduce}[1]{\textcolor{pink}{#1}}
\newcommand{\ie}{\textit{i}.\textit{e}., }
\newcommand{\eg}{\textit{e}.\textit{g}., }
 \newcommand{\linebreakand}{%
      \end{@IEEEauthorhalign}
      \hfill\mbox{}\par
      \mbox{}\hfill\begin{@IEEEauthorhalign}
    }
\newcommand{\D}{\mathrm{D}}
\newcommand{\K}{\mathrm{K}}
\newcommand{\mL}{\mathrm{L}}
\newcommand{\N}{\mathrm{N}}
\newcommand{\mP}{\mathrm{P}}
\newcommand{\mS}{\mathrm{S}}
\newcommand{\T}{\mathrm{T}}
\newcommand{\bst}[1]{{\textbf{\textcolor{red}{#1}}}}
\newcommand{\subbst}[1]{\textcolor{blue}{\underline{{#1}}}}
\newcommand{\scalea}[1]{\scalebox{1.0}{#1}}
\newcommand{\scaleb}[1]{\scalebox{1.0}{#1}}
\theoremstyle{plain}
\newtheorem{theorem}{Theorem}[section]
\theoremstyle{definition}
\theoremstyle{remark}
\title{Mixture of Low Rank Adaptation with Partial Parameter Sharing for Time Series Forecasting}
\author{
    \textbf{Licheng Pan$^{1}$\quad Zhichao Chen$^{1}$ \quad Haoxuan Li$^{2}$ \quad Guangyi Liu$^{1}$ \quad Zhijian Xu$^{3}$} \\ 
    \textbf{Zhaoran Liu$^{1}$ \quad Hao Wang$^{1}$ \quad Ying Wei$^{1}$} \\
    $^1$Zhejiang University \quad
    $^2$Peking University \quad 
    $^3$The Chinese University of Hong Kong \\
    \texttt{lc.pan@zju.edu.cn \quad Ho-ward@outlook.com}
}
\begin{document}

\maketitle

\begin{abstract}

Multi-task forecasting has become the standard approach for time-series forecasting (TSF). However, we show that it suffers from an \textsc{Expressiveness Bottleneck}, where predictions at different time steps share the same representation, leading to unavoidable errors even with optimal representations. To address this issue, we propose a two-stage framework: first, pre-train a foundation model for one-step-ahead prediction; then, adapt it using step-specific LoRA modules.
This design enables the foundation model to handle any number of forecast steps while avoiding the expressiveness bottleneck. We further introduce the \texttt{Mixture-of-LoRA (MoLA)} model, which employs adaptively weighted LoRA experts to achieve partial parameter sharing across steps. This approach enhances both efficiency and forecasting performance by exploiting interdependencies between forecast steps. Experiments show that MoLA significantly improves model expressiveness and outperforms state-of-the-art time-series forecasting methods. Code is available at~\url{https://anonymous.4open.science/r/MoLA-BC92}.

\end{abstract}

\section{Introduction}

Time-series forecasting (TSF) involves predicting future sequences based on observed ones and is widely applied in domains such as finance (\eg stock price prediction~\citep{fintsb}), meteorology (\eg weather forecasting~\citep{weather_app,weather_app2}), and manufacturing (\eg process monitoring~\citep{energy_app}). 
To construct effective TSF models, two questions warrant investigation: \textit{(1) How to extract informative representations from historical sequences; (2) How to generate accurate predictions using the extracted representations.}

Recent research has predominantly focused on learning representations from historical sequences, where the key is developing neural network architectures that can effectively capture the temporal dynamics within time-series data. Exemplary works in this area include recurrent neural networks~\citep{deepar}, convolutional neural networks~\citep{Timesnet, micn}, and graph neural networks~\citep{FourierGNN}.
Current progress is characterized by an ongoing debate between Transformers and simple linear models. Transformers, equipped with self-attention mechanisms, provide superior scalability~\citep{itransformer, PatchTST,fredformer}. In contrast, linear models, which encapsulate temporal dynamics using linear layers, are straightforward to implement and often demonstrate strong performance~\citep{DLinear,FBM,sparsetsf}.
These advancements showcase the rapid evolution of representation learning in the field of time-series data.

Despite significant progress in time-series representation learning, methods for generating predictions from these representations remain relatively underdeveloped. Specifically, current approaches predominantly adopt the multi-task forecasting (MT-F) paradigm, which employs a learnable linear layer to transform acquired representations into multi-step forecasts.
In this work, we reveal that this approach is inherently constrained by an expressiveness bottleneck. Specifically, forecasts for different time steps are restricted to being linear combinations of shared representation bases. This limitation significantly constrains model capacity, leading to inevitable forecasting errors regardless of the quality of the learned representations.

To mitigate the expressiveness bottleneck, we propose a two-stage framework for constructing time-series forecasting models.
It involves first training a single-step forecast model, and then adapting it to other forecast horizons by training step-specific LoRA (Low-Rank Adaptation) modules. However, directly applying this approach neglects the interdependencies between different forecast steps, resulting in huge computational costs and suboptimal performance.
To mitigate these issues, we introduce the Mixture-of-LoRA (MoLA) model. This approach shares partial parameters among step-specific LoRA modules, which effectively reduces computational cost and enhances forecasting performance.

Our main contributions are summarized as follows:
\begin{itemize}[leftmargin=*]
    \item We formalize the expressiveness bottleneck in the prevalent MT-F paradigm, which limits the capacity of forecasting models. We further propose a two-stage framework to construct time-series forecasting models free from this bottleneck.
    \item We introduce MoLA to implement the two-stage framework. Building on the standard LoRA technique, MoLA enables partial parameter sharing among LoRA modules across different forecast steps, leveraging interdependencies between these steps to enhance both efficiency and accuracy.
    \item We validate the efficacy of MoLA through extensive experiments, demonstrating its substantial superiority over state-of-the-art methods trained from scratch under the same forecasting lengths.
\end{itemize}

\section{Preliminaries}
\label{subsec:TSF}

In this work, we consider the time-series forecasting (TSF) problem: predicting future observations from historical data. Given a time-series dataset $X$ with $\D$ covariates, where $X_n$ denotes the observation at the $n$-th step, we define two key elements~\citep{box2015time}: (1) \textbf{Historical sequence} $L = [X_{n-\mL+1}, \ldots, X_n] \in \mathbb{R}^{\mL \times \D}$, where $\mL$ is the historical window length; (2) \textbf{Label sequence} $Y = [X_{n+1}, \ldots, X_{n+\T}] \in \mathbb{R}^{\T \times \D}$, where $\T$ is the forecast horizon. Based on these elements, TSF can be described as estimating $\mathbb{E}[Y|L]$—the expected label sequence conditioned on the past~\citep{nguyen2024scaling,ghimire2024two}.

Most TSF models follow a two-stage architecture. The encoder $g_\mathrm{e}$ extracts informative representations from historical sequence: $R=g_\mathrm{e}(L)$; the decoder $g_\mathrm{d}$ then transforms $R$ to generate forecasts: $\hat{Y}=g_\mathrm{d}(R)$.
While both stages are essential, recent research has mainly advanced encoder designs, developing diverse architectures such as convolutional neural networks~\citep{Timesnet, micn}, graph neural networks~\citep{FourierGNN}, and Transformers~\citep{itransformer,Transformer}.  In contrast, the decoder is understudied despite its importance.

There are two mainstream decoding approaches for TSF. Early methods adopt the autoregressive forecasting (AR-F) approach, where a linear layer with a single output produces one-step-ahead predictions, and the forecast sequence is generated sequentially by feeding previous outputs back as inputs. However, this approach suffers from error accumulation: errors in earlier steps interfere with later predictions.
Modern TSF methods favor the multi-step forecasting (MT-F) paradigm~\citep{Informer,DLinear,itransformer}. As illustrated in Figure~\ref{subfig:MT-F}, a linear layer with $\T$ outputs is applied to $R$ to generate the entire forecast sequence at once. This direct approach reduces error accumulation and eliminates the sequential computation overhead, making it the preferred choice in current TSF models.

\section{Proposed Method}
\subsection{Motivation}

MT-F has been a pervasive approach for generating forecasts from learned representations~\citep{itransformer,Timesnet,DLinear}, as it mitigates error accumulation and eliminates the sequential computation overhead~\citep{Informer}. Here, we formulate the MT-F workflow and identify an expressiveness bottleneck limiting its capacity.

Let $R\in\mathbb{R}^{\mL\times\mathrm{D}}$ be the encoder output. MT-F generates the forecast sequence using a linear layer with learnable weights 
$W\in\mathbb{R}^{\T\times\mL}$ and $b\in\mathbb{R}^{\T}$:
\begin{equation}
    \left[\hat{Y}_1, \hat{Y}_2, \ldots, \hat{Y}_\T\right] = \left[W_1, W_2,\ldots,  W_\T\right] R + [b_1, b_2,...,b_\T].
\end{equation}
where $\hat{Y}_t=W_t R + b_t$ is the prediction for the $t$-th future step. This  immediately assumes that different tasks (\ie forecast steps) share the same optimal representation ($R$), such that applying different linear weights yields accurate forecasts. However, in practice, the optimal representation for each step may differ substantially, especially when the forecast horizon is large (\eg $\T=720$).
This mismatch results in an \textit{expressiveness bottleneck}: regardless of encoder quality, using a shared representation across all forecast steps leads to an unavoidable modeling error. As shown in Theorem~\ref{thm:bottleneck_quantification}, this error is strictly positive when $\mathrm{L}+1 < \mathrm{T}$ --- a typical setting in the long-term TSF task.





\begin{theorem}[Expressiveness Bottleneck]
\label{thm:bottleneck_quantification}
Let $\bar{W} = [W \ b] \in \mathbb{R}^{\T \times (\mL+1)}$ be the parameters in the MT-F's linear decoder, $Y\in\mathbb{R}^{\T\times\mathrm{D}}$ be the label sequence;  the minimum attainable estimation error is
\begin{equation}
    \|\epsilon\|_F^2 = \sum_{t=\mathrm{rank}(\bar{W})+1}^{\T} \|U_t^\top Y\|_2^2,
\end{equation}
where $\bar{W} = U\Sigma V^\top$ is the singular value decomposition of $\bar{W}$,
$\mathrm{rank}(\bar{W}) \leq \min\{\T, \mL+1\}$, and $\{U_i\}_{i=\mathrm{rank}+1}^{\T}$ form an orthonormal basis for the null space of $\bar{W}$. Notably, this error is independent of the representation $R$ provided by encoder.
\end{theorem}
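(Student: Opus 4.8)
The plan is to show that, once the decoder weights $\bar{W}$ are fixed, every forecast $\hat{Y}$ the model can produce is confined to a subspace determined solely by $\mathrm{range}(\bar{W})$, so the smallest achievable error is just the squared distance from the label sequence $Y$ to that subspace; this distance is manifestly independent of which representation $R$ the encoder actually outputs.

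First I would collapse the affine decoder into a single linear map. Appending a constant row of ones to $R$ yields an augmented representation $\bar{R}\in\mathbb{R}^{(\mL+1)\times\D}$ for which $\hat{Y}=WR+b\mathbf{1}^\top=\bar{W}\bar{R}$, so the estimation error is $\norm{Y-\bar{W}\bar{R}}_F^2$ and the minimum attainable error is its infimum over admissible $\bar{R}$. The Frobenius norm decouples across covariates, $\norm{Y-\bar{W}\bar{R}}_F^2=\sum_{d=1}^{\D}\norm{Y_{:,d}-\bar{W}\bar{R}_{:,d}}_2^2$, and each summand is minimized separately. Treating each column $\bar{R}_{:,d}$ as a free vector in $\mathbb{R}^{\mL+1}$, the corresponding forecast column $\bar{W}\bar{R}_{:,d}$ sweeps out exactly the column space $\mathrm{range}(\bar{W})\subseteq\mathbb{R}^{\T}$.

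Next I would invoke the orthogonal projection theorem: the best approximation of $Y_{:,d}$ from $\mathrm{range}(\bar{W})$ is its orthogonal projection, and the residual is its projection onto the orthogonal complement $\mathrm{range}(\bar{W})^\perp$. Substituting the SVD $\bar{W}=U\Sigma V^\top$, the columns $U_1,\dots,U_{\mathrm{rank}(\bar{W})}$ furnish an orthonormal basis of $\mathrm{range}(\bar{W})$, while $U_{\mathrm{rank}(\bar{W})+1},\dots,U_{\T}$ span its orthogonal complement. Hence the minimal residual for covariate $d$ equals $\sum_{t=\mathrm{rank}(\bar{W})+1}^{\T}(U_t^\top Y_{:,d})^2$; summing over $d$ and exchanging the order of summation produces $\sum_{t=\mathrm{rank}(\bar{W})+1}^{\T}\norm{U_t^\top Y}_2^2$, the claimed value. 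Independence from $R$ is then immediate, as this expression contains only $\bar{W}$ and $Y$; a concrete encoder output only governs how near one gets to this floor. Strict positivity when $\mL+1<\T$ follows because $\mathrm{rank}(\bar{W})\le\min\{\T,\mL+1\}=\mL+1<\T$ keeps the residual sum nonempty, so it vanishes only in the degenerate case where $Y$ lies entirely in $\mathrm{range}(\bar{W})$.

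The step I expect to be the main obstacle is the careful handling of the bias, specifically the claim that the achievable forecast columns fill all of $\mathrm{range}(\bar{W})$. Because the last row of $\bar{R}$ is pinned to ones, the genuinely attainable set for each column is the affine subspace $b+\mathrm{range}(W)$ rather than the linear span $\mathrm{range}(\bar{W})=\mathrm{range}(W)+\mathrm{span}(b)$; relaxing the bias coefficient to be free is what turns the distance into the clean SVD residual above. I would therefore either argue that this relaxation is exact under the intended reading (the forecast is any element of $\mathrm{range}(\bar{W})$), or note that it yields a valid lower bound on the error, which is all that is needed to establish the bottleneck. A secondary point is terminology: $U_{\mathrm{rank}(\bar{W})+1},\dots,U_{\T}$ span the orthogonal complement of $\mathrm{range}(\bar{W})$ in $\mathbb{R}^{\T}$, equivalently the left null space $\ker(\bar{W}^\top)$, which I would state precisely.
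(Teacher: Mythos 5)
Your proof is correct and follows the same overall route as the paper's: both reduce the problem to orthogonal projection of $Y$ onto the column space of $\bar{W}$ and read off the residual in the SVD basis $U_{\mathrm{rank}(\bar{W})+1},\dots,U_{\T}$. The differences are in execution, and yours is the more careful version. The paper derives the projector from the normal equations, explicitly assuming $\bar{W}^\top\bar{W}$ is invertible — i.e.\ that $\bar{W}$ has full column rank $\mL+1$ — even though the theorem statement allows $\mathrm{rank}(\bar{W})<\mL+1$; in the rank-deficient case the formula $P=\bar{W}(\bar{W}^\top\bar{W})^{-1}\bar{W}^\top$ would require a pseudoinverse, whereas your appeal to the best-approximation (projection) theorem covers all ranks without any such hypothesis. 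You also handle the $\D$ covariate columns of $Y$ explicitly by decoupling the Frobenius norm, while the paper treats $\bar{R}$ as a single vector in $\mathbb{R}^{(\mL+1)\times 1}$ against $Y\in\mathbb{R}^{\T\times\D}$, which only literally typechecks for $\D=1$. Finally, you are the only one to flag the bias subtlety: the paper defines $\bar{R}=[R^\top,1]^\top$ with the last coordinate pinned to one, yet then minimizes over $\bar{R}$ freely — precisely the relaxation you identify. Your resolution (the relaxation gives a valid lower bound on the attainable error, which suffices to establish the bottleneck, and the expression depends only on $\bar{W}$ and $Y$) is the right way to close that gap, and your remark that the trailing $U_t$ span the left null space $\ker(\bar{W}^\top)$ rather than the (right) null space of $\bar{W}$ corrects a genuine imprecision in the theorem's wording.
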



\begin{figure}[t]
\begin{center}
\subfigure[Synthetic data]{\includegraphics[width=0.245\linewidth]{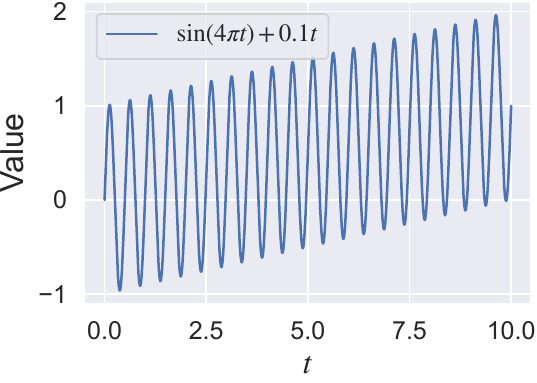}\label{subfig:syn_data}}
\hfill
\subfigure[T=1]{\includegraphics[width=0.245\linewidth]{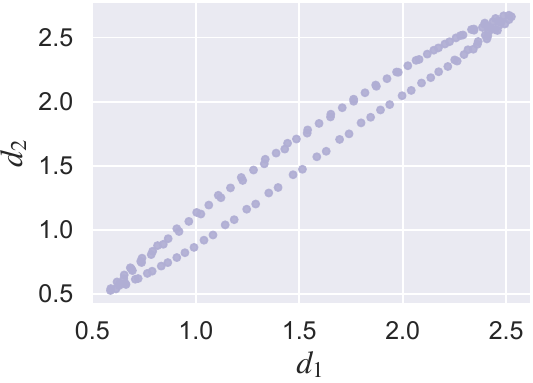}}
\hfill
\subfigure[T=16]{\includegraphics[width=0.245\linewidth]{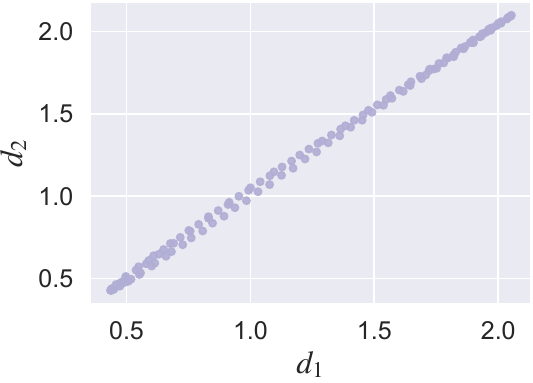}}
\hfill
\subfigure[T=32]{\includegraphics[width=0.245\linewidth]{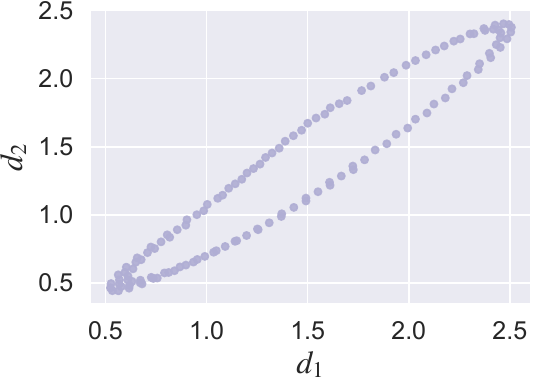}}
\caption{Visualization of representations generated with different forecasting step.}
\label{fig:toy}
\end{center}
\vspace{-5mm}
\end{figure}

\paragraph{Case Study.} To illustrate the expressiveness bottleneck, we conduct a case study on a synthetic dataset. We set the historical window length to $\mL=16$. The encoder is a two-layer perceptron with hidden sizes 16 and 2; the decoder is a linear layer with a single output. For each forecast step, we train a separate single-output model and visualize the encoder outputs in Figure~\ref{fig:toy} (b)-(d). The results show that the optimal representations differ significantly across steps, demonstrating that sharing a single representation across all forecast steps leads to unavoidable modeling errors.

Given the decoder’s critical role and the limitations of current approaches, there is a clear need for a decoding strategy that can overcome the expressiveness bottleneck. At the same time, the primary advantage of MT-F—parameter sharing across forecast steps—has proven beneficial for accuracy and should be preserved. To this end, two key questions warrant investigation: \textit{How to design a decoding strategy that avoids the expressiveness bottleneck while maintaining parameter sharing? Does addressing this bottleneck actually improve forecasting performance?}

\begin{figure}[t]
\begin{center}
\subfigure[MT-F]{
    \includegraphics[width=0.141\linewidth]{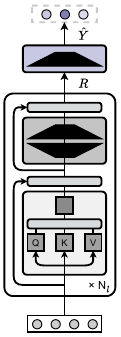}
    \label{subfig:MT-F}
}
\hfill
\raisebox{0.0\height}{\color{gray}\rule{0.8pt}{5.4cm}}
\hfill
\subfigure[MF]{
    \includegraphics[width=0.28\linewidth]{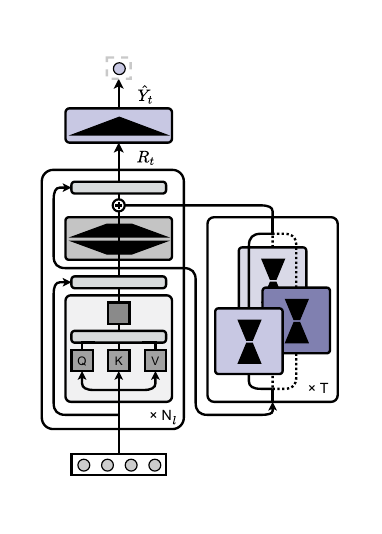}
    \label{subfig:mf}
}
\hfill
\raisebox{0.0\height}{\color{gray}\rule{0.8pt}{5.4cm}}
\hfill
\subfigure[MoLA]{
    \includegraphics[width=0.395\linewidth]{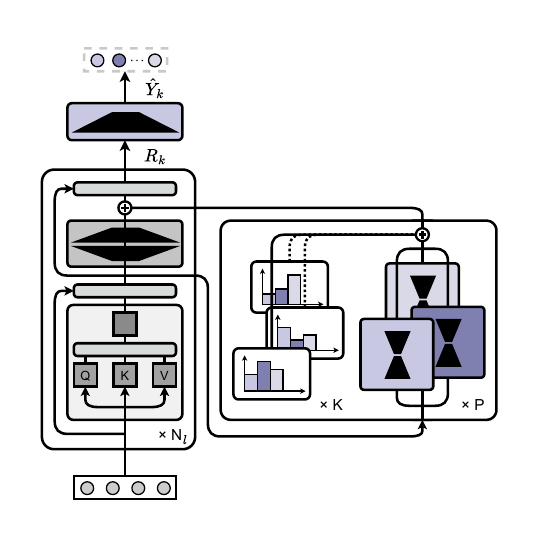}
    \label{subfig:MoLA}
}
\caption{Visualization of MT-F, LoRA and MoLA approaches to generate multi-step forecasts. Gray blocks denote identical encoder components. Purple blocks represent decoding strategies. Rectangles with varying transparencies indicate different expert matrices in MoLA.}
\label{fig:paradigms}
\end{center}
\vspace{-5mm}
\end{figure}

\subsection{A Two-Stage Framework for Breaking Expressiveness Bottleneck}

Motivated by Theorem~\ref{thm:bottleneck_quantification}, which shows that the error term disappears when the linear layer's output dimension is 1, we advocate a two-stage framework to address the expressiveness bottleneck. We first pre-train a foundation model with a single output for one-step-ahead prediction and subsequently adapt it to multiple forecasting steps. The adaptation follows Low-Rank Adaptation (LoRA), which excels at adapting foundation models to different tasks without hampering pre-training performance.

\paragraph{Pre-training.} We begin by training a foundation model for one-step prediction. Given the historical sequence $L \in \mathbb{R}^{\mL \times \mathrm{D}}$, the process can be described as follows:
\begin{equation}
    R = g_\mathrm{e}(L ),  \quad \hat{Y}_1 = g_\mathrm{d}(R) = W R + b,\quad  \mathcal{L}_\mathrm{pre} = \| Y_1 - \hat{Y}_1 \|_2^2,
\end{equation}
where the encoder $g_\mathrm{e}$ extracts features $R$, and the one-step-ahead prediction $\hat{Y}_{1} \in \mathbb{R}^\mathsf{D}$ is generated using a single-output linear projection with weights $W\in \mathbb{R}^{1\times\mL}$ and bias $b\in\mathbb{R}$. The pre-training objective is the MSE on the one-step prediction.

\paragraph{Adaptation.} After pre-training for one-step-ahead prediction ($\mathrm{T} = 1$), we adapt the model for multi-step forecasting  ($\mathrm{T} \geq 2$). To this end, we introduce LoRA modules into selected encoder linear layers, while freezing the weights of the original encoder and decoder.  
Specifically, for a selected linear layer with weight matrix $M \in \mathbb{R}^{\mathrm{d}_\mathrm{out} \times \mathrm{d}_\mathrm{in}}$, we modify the weights for each step $t$ as follows:
\begin{equation}  
    M^{(t)} = M + B^{(t)} A^{(t)},  
\end{equation}  
where $B^{(t)}\in \mathbb{R}^{\mathrm{d}_\mathrm{out} \times r}$ and $A^{(t)}\in \mathbb{R}^{r\times\mathrm{d}_\mathrm{in}}$ are LoRA matrices, with rank $r < \min(\mathrm{d}_\mathrm{out}, \mathrm{d}_\mathrm{in})$.  For each forecast step $t$, the modified encoder $g_\mathrm{e}^{(t)}$ generates predictions as follows:
\begin{equation}
    R^{(t)} = g_\mathrm{e}^{(t)}(L),  \quad \hat{Y}_t = g_\mathrm{d}(R^{(t)}) = W R + b,\quad  \mathcal{L}_\mathrm{ada}^{(t)} = \| Y_t - \hat{Y}_t \|_2^2,
\end{equation}
where the adaptation objective is the MSE on the $t$-step prediction. Notably, only the LoRA matrices ($B^{(t)}, A^{(t)}$) are optimized to minimize the adaptation objective, while the rest of the foundation model remains frozen. This process is repeated for each prediction step $1<t\leq\T$, adapting the foundation model to support diverse forecast steps.

\paragraph{Discussion. } The framework overcomes the expressiveness bottleneck by ensuring both the pretrained and adapted TSF models use a single output, thereby eliminating the error term as shown in Theorem~\ref{thm:bottleneck_quantification}. Moreover, during the adaptation stage, the pre-trained weights remain frozen, preventing any interference with the foundation model. This property has a critical implication for TSF: adapting to long-range forecasts does not compromise the performance on short-term predictions.

\subsection{Mixture of Low-rank Adaptation}
Time series data exhibits strong autocorrelation, inducing correlations across multiple forecasting steps. However, the above approach treats each forecast step independently by assigning separate LoRA matrices to generate representation and using a single-output layer for prediction. This leads to two limitations: (1) the number of adaptations increases linearly with the forecast horizon, resulting in high computational cost; and (2) neglecting inter-step correlations can impair forecasting accuracy.

In this section, we propose MoLA, a modified LoRA technique that leverages label correlations for improved performance and efficiency. According to Theorem~\ref{thm:bottleneck_quantification}, the expressiveness bottleneck vanishes as long as the number of outputs does not exceed $\mathrm{L+1}$; thus, \textit{single-step output is not strictly required}. Building on this insight, we partition the $\T$-step label sequence into $\K$ contiguous segments, each containing $\mS = \T/\K$ steps. The foundation model’s output dimension is then increased from 1 to $\mS$, allowing each adaptation to adapt $\mS$ future steps jointly, and thereby reducing the number of required adaptations. This segment-based adaptation exploits intra-segment label dependencies to improve accuracy and efficiency, while bypassing expressiveness bottleneck as long as $\mS\leq \mathrm{L+1}$.

To further exploit label correlation across segments, we enable partial parameter sharing across different LoRA modules via a mixture-of-experts mechanism. Specifically, suppose we have $\mathrm{P}$ expert matrices, each denoted as $B^{(p)}$ and $A^{(p)}$.  For a selected linear layer in the encoder with original weights $M$, the adaptation for segment $k$ is given by
\begin{equation}
\label{eq:MoLA-forward}
    M^{(k)} = M + \sum_{p=1}^{\mP} \Delta^{(p)}_k \cdot B^{(p)} A^{(p)},
\end{equation}
where $\Delta_k=[\Delta^{(1)}_k,\Delta^{(2)}_k,\ldots,\Delta^\mathrm{(P)}_k]$ is a learnable, normalized weight vector that adaptively combines the $\mP$ expert matrices for each segment. 
This design enables expert matrices to be shared across segments, allowing the model to leverage inter-segment label correlations and enrich supervision signals for improved adaptation. A visual comparison among MT-F, LoRA, and MoLA is provided in Figure~\ref{fig:paradigms}.



\subsection{Overall Workflow}


\begin{wrapfigure}{r}{0.5\linewidth}
\centering
\vspace{-8mm}
\begin{minipage}{\linewidth}
\begin{algorithm}[H]
\setstretch{1}
\algsetup{linenosize=\scriptsize}
\footnotesize
\caption{Mixture of Low-rank Adaptation for TSF.}
\label{algo:MoLA}
\flushleft
\textbf{Input}: $X$: training dataset, $\K$: number of segments, $\mP$: number of LoRA experts, $r$: rank of LoRA matrices. \\
\textbf{Parameter}: 
$\{\Delta_k\}_{k=1}^{\K}$: segment-specific weight vectors, 
$\{B^{(p)}, A^{(p)}\}_{p=1}^{\mP}$: LoRA expert matrices. \\
\textbf{Output}: 
$\{\Delta_k\}_{k=1}^{\K}$, $\{B^{(p)}, A^{(p)}\}_{p=1}^{\mP}$: optimized adaptation parameters. \\
\begin{algorithmic}[1] 
\STATE Pre-train a $\mS$-step model $\{g_\mathrm{e},g_\mathrm{d}\}$ on $X$
\STATE Freeze the foundation model parameters
\FOR{$k = 1$ to $\K$}
    \FOR{$L,Y$ in $X$}
        \STATE Modify the encoder $g_\mathrm{e}$ to $g_\mathrm{e}^{(k)}$: \\
        $M^{(k)} \leftarrow M + \sum_{p=1}^{\mP} \Delta^{(p)}_k \cdot B^{(p)}A^{(p)}$
        \STATE $R^{(k)} \leftarrow g_\mathrm{e}^{(k)}(L), \hat{Y}_k \leftarrow g_\mathrm{d}(R^{(k)})$
        \STATE $\mathcal{L}_\mathrm{ada}^{(k)} \leftarrow ||Y_k - \hat{Y}_k||_2^2$
        \STATE Update $\Delta_k$ and $\{B^{(p)}, A^{(p)}\}_{p=1}^\mP$ with $\mathcal{L}_\mathrm{ada}^{(k)}$
    \ENDFOR
    \STATE Freeze the segment-specific weight vector $\Delta_k$
\ENDFOR
\STATE Return optimized adaptation parameters
\end{algorithmic}
\end{algorithm}
\label{fig:framework}
\vspace{-5mm}
\end{minipage}
\end{wrapfigure}

In this section, we detail the procedure for applying MoLA to train TSF models, as outlined in Algorithm~\ref{algo:MoLA}. The workflow begins by pre-training a foundation model for $\mS$-step prediction (step 1). After pre-training, the model parameters are frozen (step 2).
Next, we inject adaptation modules into the backbone: for each of the $\K$ segments, a segment-specific weight vector and $\mP$ shared LoRA expert modules are introduced. For each segment, the mixing weights are learned to adaptively combine the shared LoRA experts, as defined in~\eqref{eq:MoLA-forward} (step 5). During fine-tuning, the segment-specific weight vector and segment-shared LoRA modules are optimized using the corresponding segment's ground truth, while the backbone remains frozen (steps 6-8).
And during inference, the segment-specific adapters are applied to the frozen foundation model to generate forecasts for each segment, which are then concatenated to produce the final estimations.

\section{Experiments}
To validate the effectiveness of our MoLA approach for TSF, we conduct a comprehensive empirical evaluation across six key dimensions:

\begin{enumerate}[leftmargin=*]
    \item \textbf{Performance:} \textit{How does MoLA perform compared to the current state-of-the-art?} 
    In Section~\ref{subsec:overall}, we benchmark MoLA against state-of-the-art baselines using public datasets to assess its overall performance.
    
    \item \textbf{Superiority:} \textit{Is MoLA more effective than other forecasting paradigms?}
    Section~\ref{subsec:paradigm} provides a comparative analysis of MoLA versus AR-F and MT-F paradigms, demonstrating its advantages in forecasting accuracy.
    
    
    \item \textbf{Generality:} \textit{Can MoLA be applied to other forecasting models?}
    In Section~\ref{subsec:gene_model}, we assess the versatility of MoLA by applying it to various forecasting models, illustrating its broad applicability.
    
    \item \textbf{Flexibility:} \textit{Does MoLA accommodate different fine-tuning modules besides LoRA?}
    Section~\ref{subsec:gene_module} explores MoLA's flexibility by substituting LoRA with alternative fine-tuning modules, demonstrating its adaptability in implementation.
    
    \item \textbf{Sensitivity:} \textit{How sensitive is MoLA to hyperparameter changes?}
    In Section~\ref{subsec:sensi}, we conduct a sensitivity analysis of the hyperparameter $\K$, showing that MoLA maintains robust performance across a wide range of parameter settings.
    
\end{enumerate}

\subsection{Experimental Setup}
\paragraph{Datasets.}
We utilize a diverse set of benchmark datasets for time-series forecasting, including ETT (with 4 subsets), ECL, Traffic, Weather, and PEMS (with 2 subsets), following the experimental settings used in~\citep{Autoformer} and~\citep{itransformer}. Each dataset is split chronologically into training, validation, and test sets to preserve temporal order. Detailed descriptions of each dataset, including their characteristics and split ratios, can be found in Appendix D.1.

\paragraph{Baselines.}
Our baselines consist of a wide range of established models from the time-series forecasting domain: Transformer~\citep{Transformer}, Informer~\citep{Informer}, Autoformer~\citep{Autoformer}, FEDformer~\citep{fedformer}, iTransformer~\citep{itransformer}, DLinear~\citep{DLinear} and FreTS~\citep{FreTS}, TimesNet~\citep{Timesnet} and TCN~\citep{tcn}.

\paragraph{Implementation details.}
To ensure consistency and fairness, we faithfully reproduced all baseline models using training scripts from the official TimesNet repository\footnote{\url{https://github.com/thuml/Time-Series-Library}.}. Optimization was performed using the Adam optimizer~\cite{Adam}, with learning rates selected from $\{0.0001, 0.0005, 0.001\}$. The forecasting horizons were set to 96, 192, 336, and 720 for the ETT, ECL, Traffic, and Weather datasets, and 12, 24, 36, and 48 for the PEMS datasets.
We adopted mean square error (MSE) and mean absolute error (MAE) as primary evaluation metrics, in line with standard practices in the field. 

To integrate the MoLA paradigm into existing models, we adhered to the original hyperparameter configurations of each baseline during the pre-training phase.
Forecasting horizons of the pre-trained models were set to $\T / \K$, where $\K \in \{2, 3, 4, 6\}$. In the fine-tuning phase, we only tuned the rank $r$ in the LoRA expert module, the learning rate $\eta$, and the number of LoRA experts, ensuring minimal deviation from the original model structures. 
All experiments were conducted on Intel(R) Xeon(R) Platinum 8383C CPUs and NVIDIA RTX 4090 GPUs. Additional implementation details are provided in Appendix D.2.

\begin{table}
  \caption{Multi-step forecasting performance.}\label{tab:multistep_app}
  \centering
  \begin{threeparttable}
  \renewcommand{\arraystretch}{1.0}
  \setlength{\tabcolsep}{1.6pt}
  \tiny
  \renewcommand{\multirowsetup}{\centering}
  \begin{tabular}{c|c|cc|cc|cc|cc|cc|cc|cc|cc|cc|cc|cc}
    \toprule
    \multicolumn{2}{l}{\multirow{2}{*}{\rotatebox{0}{\scaleb{Models}}}} & 
    \multicolumn{2}{c}{\rotatebox{0}{\scaleb{\textbf{MoLA}}}} &
    \multicolumn{2}{c}{\rotatebox{0}{\scaleb{iTransformer}}} &
    \multicolumn{2}{c}{\rotatebox{0}{\scaleb{FreTS}}} &
    \multicolumn{2}{c}{\rotatebox{0}{\scaleb{TimesNet}}} &
    \multicolumn{2}{c}{\rotatebox{0}{\scaleb{TiDE}}} &
    \multicolumn{2}{c}{\rotatebox{0}{\scaleb{DLinear}}} &
    \multicolumn{2}{c}{\rotatebox{0}{\scaleb{FEDformer}}} &
    \multicolumn{2}{c}{\rotatebox{0}{\scaleb{Autoformer}}} &
    \multicolumn{2}{c}{\rotatebox{0}{\scaleb{Informer}}} &
    \multicolumn{2}{c}{\rotatebox{0}{\scaleb{Transformer}}} &
    \multicolumn{2}{c}{\rotatebox{0}{\scaleb{TCN}}} \\
    \multicolumn{2}{c}{} &
    \multicolumn{2}{c}{\scaleb{\textbf{(Ours)}}} & 
    \multicolumn{2}{c}{\scaleb{(2024)}} & 
    \multicolumn{2}{c}{\scaleb{(2023)}} & 
    \multicolumn{2}{c}{\scaleb{(2023)}} & 
    \multicolumn{2}{c}{\scaleb{(2023)}} & 
    \multicolumn{2}{c}{\scaleb{(2023)}} & 
    \multicolumn{2}{c}{\scaleb{(2022)}} &
    \multicolumn{2}{c}{\scaleb{(2021)}} &
    \multicolumn{2}{c}{\scaleb{(2021)}} &
    \multicolumn{2}{c}{\scaleb{(2017)}} &
    \multicolumn{2}{c}{\scaleb{(2017)}} \\
    \cmidrule(lr){3-4} \cmidrule(lr){5-6}\cmidrule(lr){7-8} \cmidrule(lr){9-10}\cmidrule(lr){11-12} \cmidrule(lr){13-14} \cmidrule(lr){15-16} \cmidrule(lr){17-18} \cmidrule(lr){19-20} \cmidrule(lr){21-22} \cmidrule(lr){23-24}
    \multicolumn{2}{l}{\rotatebox{0}{\scaleb{Metrics}}}  & \scalea{MSE} & \scalea{MAE}  & \scalea{MSE} & \scalea{MAE}  & \scalea{MSE} & \scalea{MAE}  & \scalea{MSE} & \scalea{MAE}  & \scalea{MSE} & \scalea{MAE}  & \scalea{MSE} & \scalea{MAE} & \scalea{MSE} & \scalea{MAE} & \scalea{MSE} & \scalea{MAE} & \scalea{MSE} & \scalea{MAE} & \scalea{MSE} & \scalea{MAE} & \scalea{MSE} & \scalea{MAE} \\
    \midrule

    \multicolumn{2}{l}{\scalea{ETTm1}}
    & \scalea{\bst{0.400}} & \scalea{\bst{0.406}} 
    & \scalea{0.415} & \scalea{0.416} 
    & \scalea{0.407} & \scalea{0.415} 
    & \scalea{0.413} & \scalea{0.418} 
    & \scalea{0.419} & \scalea{0.419} 
    & \scalea{\subbst{0.404}} & \scalea{\subbst{0.407}} 
    & \scalea{0.440} & \scalea{0.451} 
    & \scalea{0.596} & \scalea{0.517}
    & \scalea{0.887} & \scalea{0.693}
    & \scalea{0.943} & \scalea{0.733} 
    & \scalea{0.891} & \scalea{0.632} \\
    \midrule

    \multicolumn{2}{l}{\scalea{ETTm2}}
    & \scalea{\bst{0.287}} & \scalea{\bst{0.330}} 
    & \scalea{\subbst{0.294}} & \scalea{0.335} 
    & \scalea{0.335} & \scalea{0.379} 
    & \scalea{0.297} & \scalea{\subbst{0.332}} 
    & \scalea{0.358} & \scalea{0.404}
    & \scalea{0.344} & \scalea{0.396} 
    & \scalea{0.302} & \scalea{0.348} 
    & \scalea{0.326} & \scalea{0.366} 
    & \scalea{1.256} & \scalea{0.801}
    & \scalea{1.322} & \scalea{0.814} 
    & \scalea{3.411} & \scalea{1.432} \\
    \midrule

    \multicolumn{2}{l}{\scalea{ETTh1}}
    & \scalea{\subbst{0.443}} & \scalea{\bst{0.441}} 
    & \scalea{0.449} & \scalea{\subbst{0.447}} 
    & \scalea{0.488} & \scalea{0.474} 
    & \scalea{0.478} & \scalea{0.466} 
    & \scalea{0.628} & \scalea{0.574} 
    & \scalea{0.462} & \scalea{0.458} 
    & \scalea{\bst{0.441}} & \scalea{0.457} 
    & \scalea{0.476} & \scalea{0.477} 
    & \scalea{1.064} & \scalea{0.806}
    & \scalea{0.993} & \scalea{0.788} 
    & \scalea{0.763} & \scalea{0.636} \\
    \midrule

    \multicolumn{2}{l}{\scalea{ETTh2}}
    & \scalea{\bst{0.377}} & \scalea{\bst{0.401}} 
    & \scalea{\subbst{0.390}} & \scalea{\subbst{0.410}} 
    & \scalea{0.550} & \scalea{0.515} 
    & \scalea{0.413} & \scalea{0.426} 
    & \scalea{0.611} & \scalea{0.550} 
    & \scalea{0.558} & \scalea{0.516} 
    & \scalea{0.430} & \scalea{0.447} 
    & \scalea{0.478} & \scalea{0.483} 
    & \scalea{4.358} & \scalea{1.719}
    & \scalea{3.296} & \scalea{1.419} 
    & \scalea{3.325} & \scalea{1.445} \\
    \midrule

    \multicolumn{2}{l}{\scalea{ECL}}
    & \scalea{\bst{0.174}} & \scalea{\bst{0.265}} 
    & \scalea{\subbst{0.176}} & \scalea{\subbst{0.267}} 
    & \scalea{0.209} & \scalea{0.297} 
    & \scalea{0.214} & \scalea{0.307} 
    & \scalea{0.251} & \scalea{0.344} 
    & \scalea{0.225} & \scalea{0.319} 
    & \scalea{0.229} & \scalea{0.339} 
    & \scalea{0.228} & \scalea{0.339} 
    & \scalea{0.335} & \scalea{0.416}
    & \scalea{0.274} & \scalea{0.367} 
    & \scalea{0.617} & \scalea{0.598} \\
    \midrule

    \multicolumn{2}{l}{\scalea{Traffic}}
    & \scalea{\bst{0.425}} & \scalea{\bst{0.284}} 
    & \scalea{\subbst{0.428}} & \scalea{\subbst{0.286}} 
    & \scalea{0.552} & \scalea{0.348} 
    & \scalea{0.535} & \scalea{0.309} 
    & \scalea{0.760} & \scalea{0.473} 
    & \scalea{0.673} & \scalea{0.419} 
    & \scalea{0.611} & \scalea{0.379} 
    & \scalea{0.637} & \scalea{0.399} 
    & \scalea{0.727} & \scalea{0.404}
    & \scalea{0.680} & \scalea{0.376} 
    & \scalea{1.001} & \scalea{0.652} \\
    \midrule

    \multicolumn{2}{l}{\scalea{Weather}}
    & \scalea{0.266} & \scalea{\bst{0.286}} 
    & \scalea{0.281} & \scalea{0.302} 
    & \scalea{\bst{0.255}} & \scalea{0.299} 
    & \scalea{\subbst{0.262}} & \scalea{\subbst{0.288}} 
    & \scalea{0.271} & \scalea{0.320} 
    & \scalea{0.265} & \scalea{0.317} 
    & \scalea{0.311} & \scalea{0.361} 
    & \scalea{0.349} & \scalea{0.391} 
    & \scalea{0.595} & \scalea{0.541}
    & \scalea{0.632} & \scalea{0.552} 
    & \scalea{0.584} & \scalea{0.572} \\
    \midrule

    \multicolumn{2}{l}{\scalea{PEMS03}}
    & \scalea{\bst{0.112}} & \scalea{\bst{0.222}} 
    & \scalea{\subbst{0.116}} & \scalea{0.226} 
    & \scalea{0.146} & \scalea{0.257} 
    & \scalea{0.118} & \scalea{\subbst{0.223}} 
    & \scalea{0.316} & \scalea{0.370} 
    & \scalea{0.233} & \scalea{0.344} 
    & \scalea{0.174} & \scalea{0.302} 
    & \scalea{0.501} & \scalea{0.513} 
    & \scalea{0.137} & \scalea{0.241}
    & \scalea{0.126} & \scalea{0.233} 
    & \scalea{0.666} & \scalea{0.634} \\
    \midrule

    \multicolumn{2}{l}{\scalea{PEMS08}}
    & \scalea{\bst{0.138}} & \scalea{\bst{0.236}} 
    & \scalea{0.159} & \scalea{0.258} 
    & \scalea{0.174} & \scalea{0.277} 
    & \scalea{\subbst{0.154}} & \scalea{\subbst{0.245}} 
    & \scalea{0.319} & \scalea{0.378} 
    & \scalea{0.294} & \scalea{0.377} 
    & \scalea{0.232} & \scalea{0.322} 
    & \scalea{0.630} & \scalea{0.572} 
    & \scalea{0.319} & \scalea{0.314}
    & \scalea{0.249} & \scalea{0.266} 
    & \scalea{0.713} & \scalea{0.629} \\
    \bottomrule
  \end{tabular}
  \begin{tablenotes}
    \item  \scriptsize \textit{Note}:  We fix the input length as 96 following the established benchmarks~\citep{itransformer,Timesnet}. \bst{Bold} typeface highlights the top performance for each metric, while \subbst{underlined} text denotes the second-best results. The results are averaged over prediction lengths (96, 192, 336 and 720), with full results in Table~\ref{tab:multistep_app_full}.
  \end{tablenotes}
  \end{threeparttable}
  \vspace{-3mm}
\end{table}

\subsection{Overall Performance}
\label{subsec:overall}
The performance of our proposed modularized fine-tuning approach on the MSTF task is presented in Table~\ref{tab:multistep_app}. We use iTransformer as the base model $g_{\theta}$ and apply it to different forecasting horizons using the two-stage MoLA paradigm.
Overall, MoLA significantly enhances the performance of iTransformer. For example, on the ETTm1 dataset, MoLA reduces the MSE of iTransformer by 0.015. Similar improvements are observed across other datasets, underscoring the effectiveness of the MoLA paradigm in modularized and personalized fine-tuning for different forecasting horizons.

Importantly, MoLA not only improves iTransformer’s performance but also enables it to surpass models that previously outperformed iTransformer on certain datasets and metrics, like PEMS08 with MSE and MAE.
This suggests that the gains achieved by MoLA go beyond architectural designs alone, emphasizing the importance of modularized and personalized fine-tuning in addressing the expressiveness bottleneck often encountered in the MT-F paradigm.

\begin{figure}
\begin{center}
\subfigure[\hspace{-20pt}]{\includegraphics[width=0.245\linewidth]{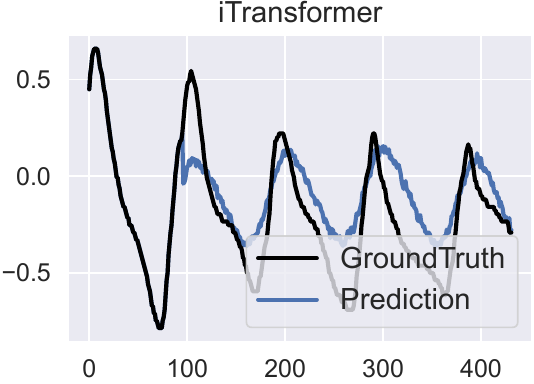}}
\hfill
\subfigure[\hspace{-20pt}]{\includegraphics[width=0.245\linewidth]{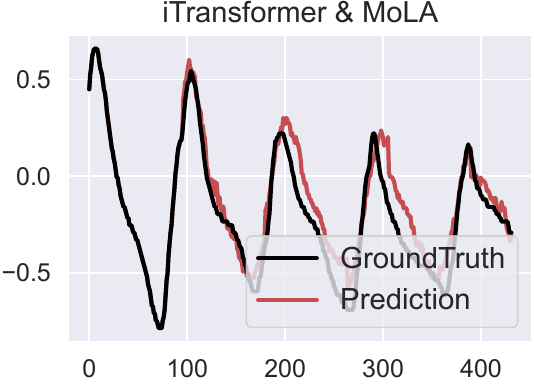}}
\hfill
\subfigure[\hspace{-20pt}]{\includegraphics[width=0.245\linewidth]{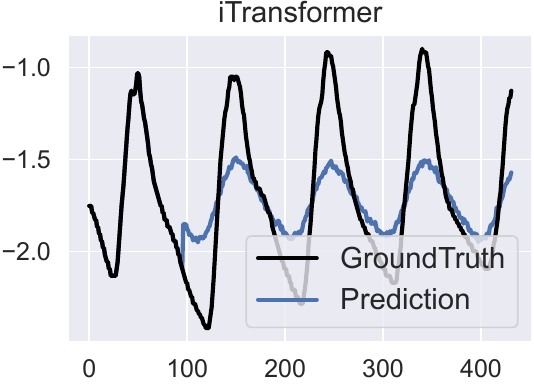}}
\hfill
\subfigure[\hspace{-20pt}]{\includegraphics[width=0.245\linewidth]{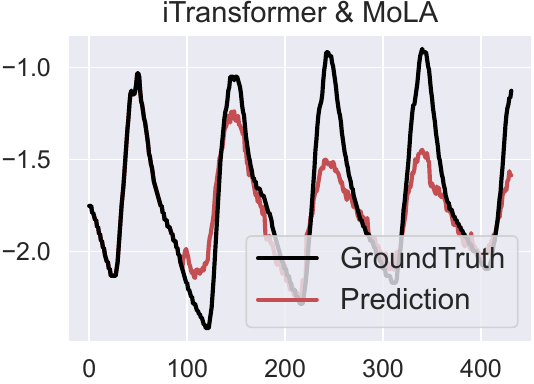}}
\caption{Visualization of forecast sequence generated with and without MoLA under two snapshots.}
\label{fig:case}
\end{center}
\vspace{-5mm}
\end{figure}


\paragraph{Showcases.}
To further illustrate the improvements provided by MoLA, we visualize forecast sequences for two snapshots from the ETTm2 dataset with a forecasting horizon of $\T=336$ in Figure~\ref{fig:case}. While the MT-F paradigm follows the general trends of the true label sequence, it struggles with capturing sharp peaks, leading to misaligned forecasts across different horizons. In contrast, MoLA mitigates this issue by employing horizon-specific fine-tuning, resulting in more accurate predictions that track both trends and sharp peaks across multiple horizons.

\subsection{Comparative Studies}
\label{subsec:paradigm}
\begin{wraptable}{r}{0.5\linewidth}
\vspace{-4mm}
\caption{Comparative study results.}
\label{tab:compara}
\renewcommand{\arraystretch}{1.3} 
\centering
\begin{threeparttable}
\setlength{\tabcolsep}{1.4pt}
\tiny
\begin{tabular}{l|l|cr|cr|cr|cr}
    \hline
    \multicolumn{2}{l|}{} & \multicolumn{4}{c|}{ETTh1} & \multicolumn{4}{c}{Weather} \\
    \hline
    \multicolumn{2}{l|}{Paradigm} & MSE & \multicolumn{1}{c|}{$\Delta$} & MAE & \multicolumn{1}{c|}{$\Delta$} & MSE & \multicolumn{1}{c|}{$\Delta$} & MAE & \multicolumn{1}{c}{$\Delta$} \\
    \hline
    
    \multirow{3}{*}{{\rotatebox{90}{\scalebox{0.95}{96}}}}
    & \scalea{AR-F}
    & 0.454 & \multicolumn{1}{c|}{-} & 0.441 & \multicolumn{1}{c|}{-}
    & 0.235 & \multicolumn{1}{c|}{-} & 0.264 & \multicolumn{1}{c}{-}  \\
    
    & \scalea{MT-F} 
    & 0.390 & \reduce{14.10\% $\downarrow$} & 0.410 & \reduce{7.03\% $\downarrow$}
    & 0.201 & \reduce{14.47\% $\downarrow$} & 0.247 & \reduce{6.44\% $\downarrow$}  \\
    
    & \scalea{MoLA} 
    & 0.379 & \reduce{16.52\% $\downarrow$} & 0.400 & \reduce{9.30\% $\downarrow$}
    & 0.173 & \reduce{26.38\% $\downarrow$} & 0.211 & \reduce{20.08\% $\downarrow$}  \\
    \hline
    
    \multirow{3}{*}{{\rotatebox{90}{\scalebox{0.95}{192}}}} 
    & \scalea{AR-F} 
    & 0.525 & \multicolumn{1}{c|}{-} & 0.484 & \multicolumn{1}{c|}{-} 
    & 0.272 & \multicolumn{1}{c|}{-} & 0.293 & \multicolumn{1}{c}{-}  \\
    
    & \scalea{MT-F} 
    & 0.443 & \reduce{15.62\% $\downarrow$} & 0.441 & \reduce{8.88\% $\downarrow$}
    & 0.250 & \reduce{8.09\% $\downarrow$} & 0.283 & \reduce{3.41\% $\downarrow$}  \\
    
    & \scalea{MoLA} 
    & 0.436 & \reduce{16.95\% $\downarrow$} & 0.432 & \reduce{10.74\% $\downarrow$}
    & 0.246 & \reduce{9.56\% $\downarrow$} & 0.280 & \reduce{4.44\% $\downarrow$}  \\
    \hline
    
    \multirow{3}{*}{{\rotatebox{90}{\scalebox{0.95}{336}}}} 
    & \scalea{AR-F} 
    & 0.580 & \multicolumn{1}{c|}{-} & 0.518 & \multicolumn{1}{c|}{-} 
    & 0.368 & \multicolumn{1}{c|}{-} & 0.357 & \multicolumn{1}{c}{-}  \\
    
    & \scalea{MT-F} 
    & 0.480 & \reduce{17.24\% $\downarrow$} & 0.457 & \reduce{11.78\% $\downarrow$}
    & 0.302 & \reduce{17.93\% $\downarrow$} & 0.317 & \reduce{11.20\% $\downarrow$}  \\
    
    & \scalea{MoLA} 
    & 0.477 & \reduce{17.76\% $\downarrow$} & 0.456 & \reduce{11.97\% $\downarrow$}
    & 0.277 & \reduce{24.73\% $\downarrow$} & 0.296 & \reduce{17.09\% $\downarrow$}  \\
    \hline
    
    \multirow{3}{*}{{\rotatebox{90}{\scalebox{0.95}{720}}}} 
    & \scalea{AR-F} 
    & 0.620 & \multicolumn{1}{c|}{-} & 0.550 & \multicolumn{1}{c|}{-} 
    & 0.600 & \multicolumn{1}{c|}{-} & 0.444 & \multicolumn{1}{c}{-}  \\
    
    & \scalea{MT-F} 
    & 0.484 & \reduce{21.94\% $\downarrow$} & 0.479 & \reduce{12.91\% $\downarrow$}
    & 0.370 & \reduce{38.33\% $\downarrow$} & 0.362 & \reduce{18.47\% $\downarrow$}  \\
    
    & \scalea{MoLA} 
    & 0.480 & \reduce{22.58\% $\downarrow$} & 0.478 & \reduce{13.09\% $\downarrow$}
    & 0.367 & \reduce{38.83\% $\downarrow$} & 0.356 & \reduce{19.82\% $\downarrow$}  \\
    \hline
\end{tabular}
\begin{tablenotes}
    \item  \scriptsize \textit{Note}: $\Delta$ denotes the relative error improvement compared to iTransformer with AR-F paradigm.
\end{tablenotes}
\end{threeparttable}

\vspace{-5mm}

\end{wraptable}

This section presents a comparative analysis of the MoLA paradigm against the traditional AR-F and MT-F paradigms, using iTransformer as the base model. The evaluation covers two datasets (ETTh1 and Weather) and four different forecasting horizons, with the results summarized in Table~\ref{tab:compara}.

Across all horizons, MoLA consistently outperforms both AR-F and MT-F paradigms. While MT-F reduces error accumulation, which is a known issue with AR-F, MoLA goes a step further by mitigating the expressiveness bottleneck of MT-F. By introducing modularized and personalized fine-tuning for each forecasting horizon, MoLA enhances the forecast accuracy, particularly for longer forecasting horizons where performance gains are more pronounced.
These findings demonstrate that MoLA is more than just an incremental improvement over MT-F, it represents a meaningful advancement in fine-tuning techniques for TSF to overcome the expressiveness bottleneck.

\subsection{Generalization Studies}
In this section, we explore the generality of the MoLA paradigm to enhance varying forecasting models and encompass existing fine-tuning techniques.

\paragraph{Generalization to Fine-tuning Techniques.}
\label{subsec:gene_module}
\begin{table*}
\caption{Generalization results.}
\label{tab:finetune_transposed}
\renewcommand{\arraystretch}{1}
\centering
\tiny
\begin{threeparttable}
\setlength{\tabcolsep}{1.4pt}
\begin{tabular}{c|c|cccc|crcr|crcr|crcr|crcr}
    \toprule
    \multicolumn{2}{l}{Variants}
    & \multicolumn{4}{c}{MT-F} 
    & \multicolumn{4}{c}{MoLA-Ada} 
    & \multicolumn{4}{c}{MoLA-IA$^3$}
    & \multicolumn{4}{c}{MoLA-R}
    & \multicolumn{4}{c}{MoLA} \\
    \cmidrule(lr){3-6} \cmidrule(lr){7-10} \cmidrule(lr){11-14} \cmidrule(lr){15-18} \cmidrule(lr){19-22} 

    \multicolumn{2}{l}{Metrics} & MSE & \multicolumn{1}{c}{$\Delta$} & MAE & \multicolumn{1}{c|}{$\Delta$} 
    & MSE & \multicolumn{1}{c}{$\Delta$} & MAE & \multicolumn{1}{c|}{$\Delta$}
    & MSE & \multicolumn{1}{c}{$\Delta$} & MAE & \multicolumn{1}{c|}{$\Delta$}
    & MSE & \multicolumn{1}{c}{$\Delta$} & MAE & \multicolumn{1}{c|}{$\Delta$}
    & MSE & \multicolumn{1}{c}{$\Delta$} & MAE & \multicolumn{1}{c}{$\Delta$} \\
    \midrule
    
    \multirow{5}{*}{{\rotatebox{90}{\scalebox{0.95}{ETTh2}}}} & 96
    & 0.301 & - & 0.349 & - 
    & 0.292 & \reduce{2.93\%$\downarrow$} & 0.342 & \reduce{2.10\%$\downarrow$}
    & 0.304 & \reduce{0.88\%$\uparrow$} & 0.346 & \reduce{0.84\%$\downarrow$}
    & 0.293 & \reduce{2.53\%$\downarrow$} & 0.342 & \reduce{2.13\%$\downarrow$}
    & 0.293 & \reduce{2.57\%$\downarrow$} & 0.341 & \reduce{2.25\%$\downarrow$} \\
    
    & 192
    & 0.382 & - & 0.402 & -
    & 0.384 & \reduce{0.55\%$\uparrow$} & 0.398 & \reduce{1.08\%$\downarrow$}
    & 0.381 & \reduce{0.17\%$\downarrow$} & 0.397 & \reduce{1.34\%$\downarrow$}
    & 0.380 & \reduce{0.54\%$\downarrow$} & 0.398 & \reduce{1.00\%$\downarrow$}
    & 0.377 & \reduce{1.25\%$\downarrow$} & 0.397 & \reduce{1.32\%$\downarrow$} \\
    
    & 336
    & 0.430 & - & 0.434 & - 
    & 0.420 & \reduce{2.31\%$\downarrow$} & 0.431 & \reduce{0.78\%$\downarrow$}
    & 0.418 & \reduce{2.88\%$\downarrow$} & 0.428 & \reduce{1.48\%$\downarrow$}
    & 0.422 & \reduce{1.81\%$\downarrow$} & 0.433 & \reduce{0.23\%$\downarrow$}
    & 0.421 & \reduce{2.06\%$\downarrow$} & 0.429 & \reduce{1.22\%$\downarrow$} \\
    
    & 720
    & 0.447 & - & 0.455 & -
    & 0.430 & \reduce{3.87\%$\downarrow$} & 0.446 & \reduce{1.97\%$\downarrow$}
    & 0.437 & \reduce{2.23\%$\downarrow$} & 0.447 & \reduce{1.68\%$\downarrow$}
    & 0.428 & \reduce{4.25\%$\downarrow$} & 0.444 & \reduce{2.44\%$\downarrow$}
    & 0.418 & \reduce{6.49\%$\downarrow$} & 0.438 & \reduce{3.67\%$\downarrow$} \\
    \cmidrule(lr){2-22}

    & Avg
    & 0.390 & - & 0.410 & -
    & 0.382 & \reduce{2.05\% $\downarrow$} & 0.402 & \reduce{1.95\% $\downarrow$}
    & 0.385 & \reduce{1.28\% $\downarrow$} & 0.405 & \reduce{1.22\% $\downarrow$}
    & 0.381 & \reduce{2.31\% $\downarrow$} & 0.402 & \reduce{1.98\% $\downarrow$}
    & 0.377 & \reduce{3.33\% $\downarrow$} & 0.401 & \reduce{2.20\% $\downarrow$} \\
    \midrule
    
    \multirow{5}{*}{{\rotatebox{90}{\scalebox{0.95}{Weather}}}} & 96
    & 0.201 & - & 0.247 & -
    & 0.202 & \reduce{0.43\%$\uparrow$} & 0.246 & \reduce{0.39\%$\downarrow$}
    & 0.204 & \reduce{1.69\%$\uparrow$} & 0.249 & \reduce{0.80\%$\uparrow$}
    & 0.202 & \reduce{0.40\%$\uparrow$} & 0.245 & \reduce{1.00\%$\downarrow$}
    & 0.173 & \reduce{13.78\%$\downarrow$} & 0.211 & \reduce{14.71\%$\downarrow$} \\
    
    & 192
    & 0.250 & - & 0.283 & -
    & 0.248 & \reduce{0.83\%$\downarrow$} & 0.279 & \reduce{1.23\%$\downarrow$}
    & 0.250 & \reduce{0.14\%$\downarrow$} & 0.281 & \reduce{0.63\%$\downarrow$}
    & 0.248 & \reduce{0.90\%$\downarrow$} & 0.279 & \reduce{1.45\%$\downarrow$}
    & 0.246 & \reduce{1.43\%$\downarrow$} & 0.280 & \reduce{1.02\%$\downarrow$} \\
    
    & 336
    & 0.302 & - & 0.317 & -
    & 0.299 & \reduce{0.89\%$\downarrow$} & 0.317 & \reduce{0.03\%$\downarrow$}
    & 0.300 & \reduce{0.60\%$\downarrow$} & 0.315 & \reduce{0.66\%$\downarrow$}
    & 0.280 & \reduce{7.28\%$\downarrow$} & 0.297 & \reduce{6.44\%$\downarrow$}
    & 0.277 & \reduce{8.28\%$\downarrow$} & 0.296 & \reduce{6.63\%$\downarrow$} \\
    
    & 720
    & 0.370 & - & 0.362 & -
    & 0.361 & \reduce{2.41\%$\downarrow$} & 0.351 & \reduce{3.04\%$\downarrow$}
    & 0.365 & \reduce{1.40\%$\downarrow$} & 0.356 & \reduce{1.66\%$\downarrow$}
    & 0.357 & \reduce{3.57\%$\downarrow$} & 0.349 & \reduce{3.63\%$\downarrow$}
    & 0.367 & \reduce{0.86\%$\downarrow$} & 0.356 & \reduce{1.60\%$\downarrow$} \\
    \cmidrule(lr){2-22}
    
    & Avg
    & 0.281 & - & 0.302 & -
    & 0.278 & \reduce{1.07\% $\downarrow$} & 0.299 & \reduce{0.99\% $\downarrow$}
    & 0.280 & \reduce{0.36\% $\downarrow$} & 0.300 & \reduce{0.66\% $\downarrow$}
    & 0.272 & \reduce{3.20\% $\downarrow$} & 0.293 & \reduce{2.98\% $\downarrow$}
    & 0.266 & \reduce{5.34\% $\downarrow$} & 0.286 & \reduce{5.30\% $\downarrow$} \\
    \bottomrule
\end{tabular}
\begin{tablenotes}
    \item  \scriptsize \textit{Note}: $\Delta$ denotes the relative error improvement compared to iTransformer with MT-F paradigm.
\end{tablenotes}
\end{threeparttable}
\end{table*}

We implement MoLA by replacing the LoRA modules with two alternative fine-tuning techniques: Adapter~\citep{Adapter} (MoLA-Ada) and IA$^3$~\citep{IA3} (MoLA-IA$^3$), to show its support of existing fine-tuning techniques. Both Adapter and IA$^3$ are well-established parameter-efficient fine-tuning technologies. We also introduce a variant that applies step-specific modularized fine-tuning with standard LoRA modules (MoLA-R) for comparison. Detailed illustrations of their technical differences and parameter settings are provided in Appendix E.3.
The results in Table~\ref{tab:finetune_transposed} indicate that the devised variants exhibit comparable improvements over the MT-F paradigm, affirming MoLA's flexibility in integrating diverse fine-tuning strategies. 
The standard MoLA, enabling parameter sharing across different tasks with the MoE mechanism, outperforms the variants since it leverages the fruitful inter-step information while maintaining model efficiency.

\paragraph{Generalization to Forecasting Models.}
\label{subsec:gene_model}
We incorporate MoLA into several well-established TSF models: iTransformer, Autoformer, Informer, and Transformer. The results, averaged across different prediction lengths (96, 192, 336, 720) and accompanied by 95\% confidence intervals, are presented in Figure~\ref{fig:backbone}. 
Overall, MoLA improves the performance of these models over the traditional MT-F paradigm. And notably, Autoformer and Informer benefit significantly from the incorporation of MoLA, showing a relative MSE improvement of over 4\% on both the ECL and Weather datasets. These results demonstrate the generality and broad applicability of MoLA, reinforcing its potential as a plug-and-play strategy for enhancing various neural forecasting models in TSF.

\begin{figure}
\begin{center}
\subfigure[ECL with MSE.]{\includegraphics[width=0.24\linewidth]{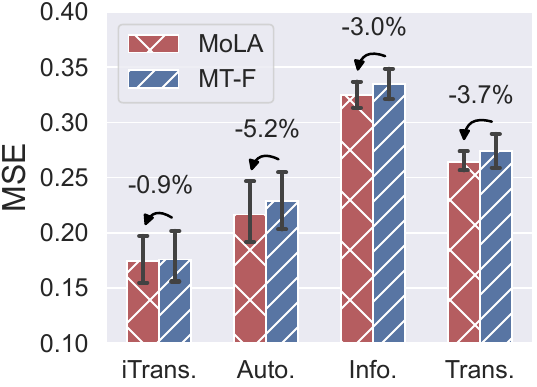}}
\hfill
\subfigure[ECL with MAE]{\includegraphics[width=0.24\linewidth]{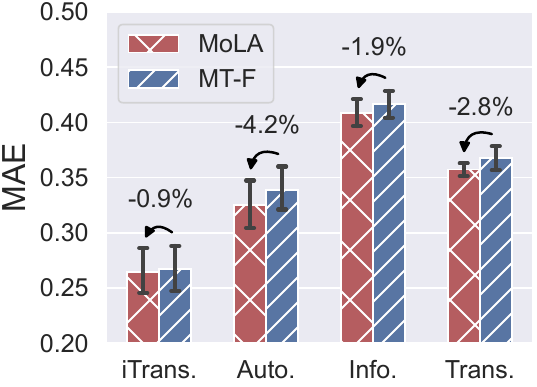}}
\hfill
\subfigure[Weather with MSE.]{\includegraphics[width=0.24\linewidth]{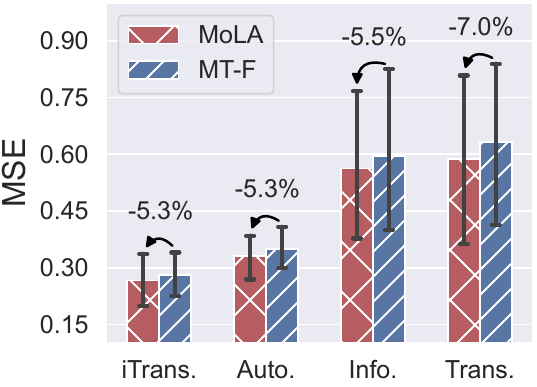}}
\hfill
\subfigure[Weather with MAE.]{\includegraphics[width=0.24\linewidth]{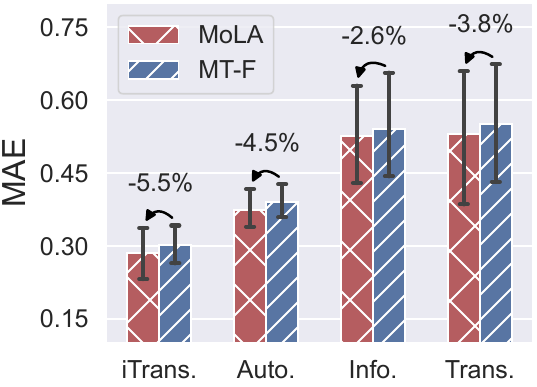}}
\caption{Benefit of incorporating MoLA in varying models, shown with colored bars for means over forecasting lengths (96, 192, 336, 720) and error bars for 95\% confidence intervals. }
\label{fig:backbone}
\end{center}
\vspace{-5mm}
\end{figure}

\subsection{Sensitivity Studies}
\label{subsec:sensi}

\paragraph{Sensitivity on Segment Number.}

\begin{wraptable}{r}{0.5\linewidth}
\vspace{-4mm}
\centering
\caption{Results of varying segment number.}
\label{tab:seg-boost}
\tiny
\tabcolsep=2.2pt
\renewcommand{\arraystretch}{1.2} 
\begin{threeparttable}
\begin{tabular}{l|cr|cr|cr|cr}
    \hline
    & \multicolumn{4}{c|}{ETTh1} & \multicolumn{4}{c}{Weather} \\
    \hline
    $\K$ & MSE & \multicolumn{1}{c|}{$\Delta$} & MAE & \multicolumn{1}{c|}{$\Delta$} & MSE & \multicolumn{1}{c|}{$\Delta$} & MAE & \multicolumn{1}{c}{$\Delta$} \\
    \hline
    MT-F & 0.390 & \multicolumn{1}{c|}{-} & 0.410 & \multicolumn{1}{c|}{-} & 0.201 & \multicolumn{1}{c|}{-} & 0.247 & \multicolumn{1}{c}{-} \\
    
    1 & 0.390 & \reduce{0.09\% $\downarrow$} & 0.410 & \reduce{0.21\% $\downarrow$} & 0.201 & \reduce{0.02\% $\downarrow$} & 0.246 & \reduce{0.69\% $\downarrow$} \\

    2 & 0.383 & \reduce{2.02\% $\downarrow$} & 0.403 & \reduce{1.79\% $\downarrow$} & 0.202 & \reduce{0.25\% $\uparrow$} & 0.246 & \reduce{0.61\% $\downarrow$} \\

    3 & 0.389 & \reduce{0.33\% $\downarrow$} & 0.409 & \reduce{0.24\% $\downarrow$} & 0.200 & \reduce{0.54\% $\downarrow$} & 0.244 & \reduce{1.23\% $\downarrow$} \\
    
    4 & 0.379 & \reduce{2.93\% $\downarrow$} & 0.400 & \reduce{2.47\% $\downarrow$} & 0.201 & \reduce{0.03\% $\downarrow$} & 0.245 & \reduce{1.07\% $\downarrow$} \\
    
    8 & 0.391 & \reduce{0.14\% $\uparrow$} & 0.408 & \reduce{0.70\% $\downarrow$} & 0.173 & \reduce{13.99\% $\downarrow$} & 0.211 & \reduce{14.68\% $\downarrow$} \\
    
    16 & 0.390 & \reduce{0.10\% $\downarrow$} & 0.407 & \reduce{0.79\% $\downarrow$} & 0.176 & \reduce{12.57\% $\downarrow$} & 0.214 & \reduce{13.53\% $\downarrow$} \\
    
    32 & 0.388 & \reduce{0.66\% $\downarrow$} & 0.403 & \reduce{1.81\% $\downarrow$} & 0.174 & \reduce{13.43\% $\downarrow$} & 0.213 & \reduce{14.02\% $\downarrow$} \\
    
    48 & 0.389 & \reduce{0.27\% $\downarrow$} & 0.405 & \reduce{1.37\% $\downarrow$} & 0.198 & \reduce{1.50\% $\downarrow$} & 0.244 & \reduce{1.47\% $\downarrow$} \\
    
    96 & 0.389 & \reduce{0.31\% $\downarrow$} & 0.404 & \reduce{1.61\% $\downarrow$} & 0.181 & \reduce{9.92\% $\downarrow$} & 0.221 & \reduce{10.84\% $\downarrow$} \\
    \hline
\end{tabular}
\begin{tablenotes}
\scriptsize
\item \textit{Note}: $\Delta$ denotes the relative error improvement compared to iTransformer with MT-F paradigm. The results are generated with forecasting length fixed at 96.
\end{tablenotes}
\end{threeparttable}
\vspace{-1mm}
\end{wraptable}

In this section, we investigate the impact of the segment number $\K$ on the performance of MoLA.
We conduct experiments on the ETTh1 and Weather datasets by varying $\K$ while keeping $\T=96$, and the results are presented in Table~\ref{tab:seg-boost}.
First, MoLA outperforms MT-F across nearly all values of $\K$. Secondly, smaller segment sizes (\eg $\K=4$ for ETTh1 and $\K=8$ for the Weather dataset) achieve the best performance, demonstrating the effectiveness of the segmentation strategy in enhancing MoLA’s forecasting accuracy since it enables the utilization of correlations in neighboring steps.
Therefore, although ideally each prediction length would have its own set of LoRA weights, \ie $\K=\T$, the segmentation strategy offers a flexible solution that reduces computational complexity while simultaneously improving forecasting performance.

\paragraph{Sensitivity on Other Hyperparameters.}  
We then examine the sensitivity of MoLA paradigm to three key hyperparameters: the rank $r$ of the LoRA expert modules, the learning rate $\eta$ used during fine-tuning and the number of LoRA expert modules. Experiments on ETTh1 and Weather datasets (prediction lengths 192 and 336) reveal distinct patterns, as shown in Figure~\ref{fig:sensi}. First, for rank $r$ (tested over $\{4, 8, 16, 32, 64\}$), both excessively small and large ranks degrade performance, with the optimal value balancing flexibility and overfitting risks. For instance, on ETTh1, $r=8$ achieves minimal MSE values of 0.441 (192) and 0.484 (336), while higher ranks overfit and lower ranks lack expressiveness. Second, learning rate $\eta$ exhibits sharper sensitivity: increasing $\eta$ from 0.0002 to 0.001 on ETTh1 reduces MSE from 0.459 to 0.440 for 192-step forecasts, aligning with prior observations on its critical role in convergence~\citep{LoRA-sensi}. Finally, the number of LoRA experts shows stable performance unless it severely mismatches the segment count $\K$. For example, on Weather (192-step), expanding experts from 2 to 10 only reduces MSE by 0.001, suggesting robustness to moderate variations. These trends highlight the necessity of calibrating $r$ and $\eta$ precisely, while the expert count offers flexibility within practical bounds. Extended results across forecasting lengths are provided in Appendix E.4.

\begin{figure}[t]
\begin{center}
\subfigure[Varying $r$ on ETTh1.]{\includegraphics[width=0.28\linewidth]{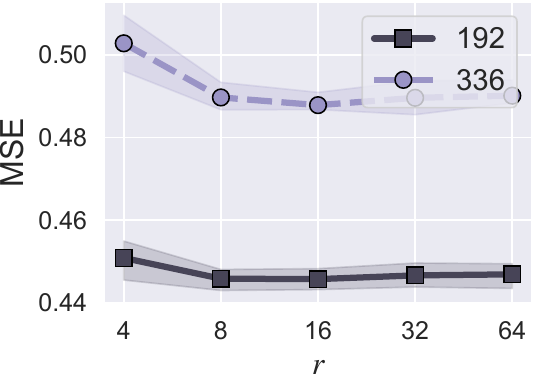}\label{subfig:sensi-a}}
\hspace{8pt}
\subfigure[Varying $\eta$ on ETTh1.]{\includegraphics[width=0.28\linewidth]{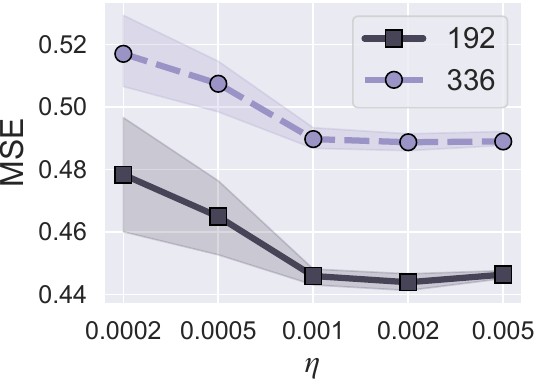}\label{subfig:sensi-c}}
\hspace{8pt}
\subfigure[Varying $\mP$ on ETTh1.]{\includegraphics[width=0.28\linewidth]{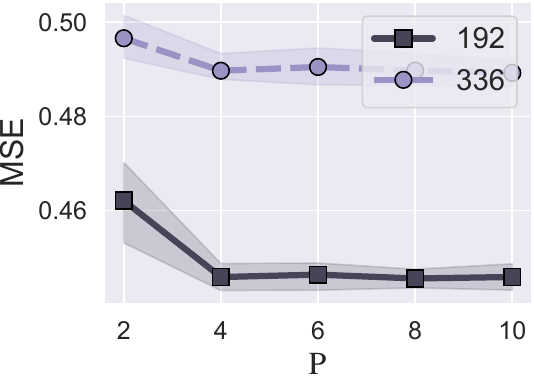}\label{subfig:sensi-e}}

\subfigure[Varying $r$ on Weather.]{\includegraphics[width=0.28\linewidth]{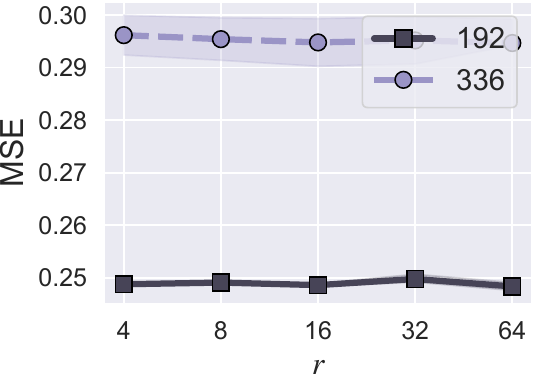}\label{subfig:sensi-b}}
\hspace{8pt}
\subfigure[Varying $\eta$ on Weather.]{\includegraphics[width=0.28\linewidth]{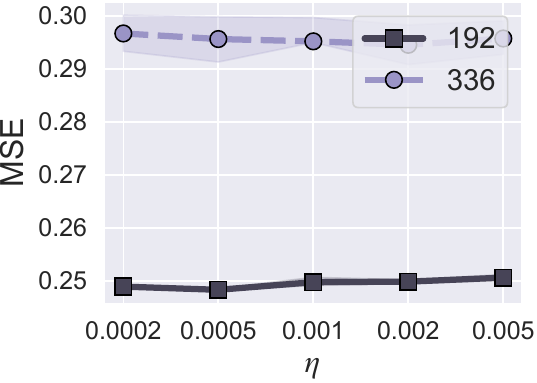}\label{subfig:sensi-d}}
\hspace{8pt}
\subfigure[Varying $\mP$ on Weather.]{\includegraphics[width=0.28\linewidth]{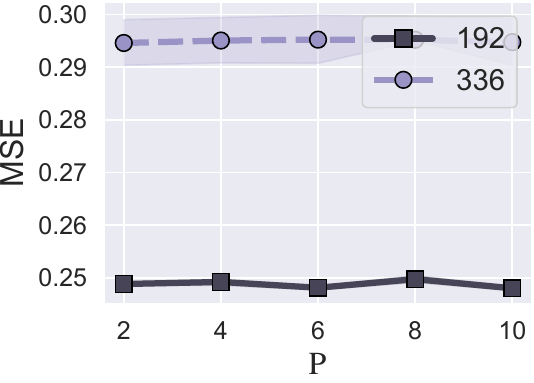}\label{subfig:sensi-f}}
\caption{Performance given varying rank $r$, learning rate $\eta$ and the number of experts $\mP$.}
\label{fig:sensi}
\end{center}
\end{figure}

\section{Conclusion}\label{sec:conclusion}
In this work, we address the expressiveness bottleneck in time-series forecasting, a limitation that hinders accurate multi-step prediction with existing multi-task paradigm. We identify that conventional linear decoders, by sharing representations across different forecast steps, are inherently inadequate for capturing step-specific dynamics, resulting in unavoidable forecast errors. 
To handle this challenge, we propose a two-stage framework: first, pre-training a foundation model for one-step-ahead prediction, and then adapting it for multi-step forecasting using step-specific Low-Rank Adaptation (LoRA) modules. Building on this design, we further introduce the Mixture-of-LoRA (MoLA) model, which leverages segment-based adaptation and adaptively weighted LoRA experts to achieve partial parameter sharing across forecast steps. This approach enables MoLA to flexibly balance expressiveness and efficiency, while effectively exploiting interdependencies between forecast steps. 
Extensive experiments across diverse datasets and backbone models demonstrate that MoLA consistently improves forecasting accuracy and outperforms state-of-the-art time-series forecasting methods, highlighting the promise of modularized fine-tuning and partial parameter sharing for multi-horizon forecasting.

\textbf{Limitations and Future Work.} This study primarily focuses on adaptation using LoRA expert modules for fine-tuning the $\mS$-step foundation model. However, the parameter cost of LoRA still scales with model size and forecast horizon. Exploring alternative parameter-efficient adaptation techniques, such as prompt-tuning or adapter-tuning, may further enhance the scalability and applicability of the proposed framework. Additionally, in this work, the segmentation of forecast steps is manually specified. Automatically determining segment boundaries based on data-driven analysis of time-series patterns and periodicity represents a promising direction for further optimizing the MoLA framework.

{
\small

\bibliographystyle{plain}
\bibliography{refs}
}


\newpage
\appendix

\section{Related Work}
\subsection{Time Series Forecasting Modeling}
Time-series forecasting (TSF) modeling generally involves encoding historical sequences and decoding temporal representations. To exploit temporal dynamics in the encoding phase, various deep learning backbones have been developed, generally grouped into four main categories: RNN-based (\eg SegRNN~\citep{segrnn}), CNN-based (\eg TimesNet~\citep{Timesnet}), GNN-based (\eg MAGNN~\citep{magnn}), MLP-based, and Transformer-based methods. 
Recent debates focus on MLP versus Transformer architectures, where MLPs (\eg DLinear~\citep{DLinear}, TSMixer~\citep{TSMixer2}) are efficient but limited in handling complex temporal patterns, whereas Transformers (\eg PatchTST~\citep{PatchTST}, iTransformer~\citep{itransformer}) excel in temporal encoding but are computationally intensive.
To better capture intricate temporal patterns, specialized designs such as series decomposition (\eg Autoformer~\citep{Autoformer}) and multiperiodicity analysis (\eg FiLM~\citep{film}) have been proposed, addressing seasonal and mixed period forecasting, respectively.
Recent innovations explore frequency-domain representations for temporal patterns, exemplified by FedFormer~\citep{fedformer} which employs frequency-domain attention score computation to enhance both efficiency and effectiveness. This paradigm demonstrates remarkable adaptability across architectures, including Transformers~\citep{fedformer,Autoformer}, MLPs~\citep{FreTS}, and GNNs~\citep{FourierGNN,stemgnn}, establishing frequency-domain analysis as a versatile plug-and-play component for temporal modeling.

Despite significant advancements in the encoding phase, the decoding phase remains underexplored in TSF modeling. The dominant multi-task forecasting (MT-F) paradigm employs simplistic affine transformations to map temporal representations to multi-step predictions. While computationally efficient, this approach inherently suffers from an \texttt{expressiveness bottleneck}, where predictions are confined to linear combinations of shared representation bases. Our theoretical analysis reveals that such constrained decoding mechanisms induce systematic error accumulation and suboptimal prediction expressiveness, even with well-encoded temporal patterns, exposing fundamental limitations in conventional MT-F paradigm.

\subsection{Modularized Fine-tuning}
Fine-tuning has emerged as a pivotal technique for adapting pre-trained models to downstream tasks, initially gaining prominence in natural language processing and computer vision~\citep{CV-FT}. Modern approaches have evolved into modular frameworks categorized into three paradigms: Adapter-based, Selection-based, and LoRA-based methods.
The Adapter-based paradigm introduces lightweight task-specific modules between pre-trained layers, preserving original parameters while enabling domain adaptation~\citep{serial-adapter,parallel_adapter,CoDA,mera}. In contrast, Selection-based methods employ parameter masking strategies to identify critical subnetworks for task-specific tuning~\citep{PaFi,SPT,FAR,cross_attn}.
The LoRA-based paradigm marks a significant technical evolution in parameter-efficient fine-tuning by introducing Low-Rank Adaptation (LoRA)~\citep{LoRA,note_on_lora}, which augments certain layers of a pre-trained model with trainable low-rank matrices. Instead of updating the full set of model parameters, LoRA only optimizes a small, low-rank decomposition of the weight update, thereby drastically reducing memory and computational overhead. This design allows for efficient adaptation to new tasks with minimal storage and enables fast switching between tasks by maintaining separate LoRA weights. LoRA’s effectiveness and versatility have been further demonstrated through updated matrix decomposition~\citep{loretta,adalora}, quantization~\citep{qlora,loftq}, and ranking adaptation~\citep{sora,cora}, which reflects a broader trend toward modularized and scalable LoRA applications.

In the context of TSF, LoRA-based fine-tuning has demonstrated promising capabilities from two main perspectives. On one hand, \citep{TSF-FT5} introduces channel-aware LoRA, leveraging low-rank adaptation to capture channel dependencies. On the other hand, a series of studies~\citep{TSF-FT1,TSF-FT2,TSF-FT3,TSF-FT4} investigate the impact of LoRA within time series foundation models, focusing on how LoRA facilitates efficient adaptation and task transfer in time series forecasting.
Though these explorations are promising, they largely overlook two critical aspects: the expressiveness bottleneck we have identified in TSF, and the untapped potential of LoRA for overcoming this limitation. Our work diverges by specifically addressing the MT-F paradigm’s expressiveness bottleneck through innovative integration of temporal segmentation and Mixture-of-LoRA enhanced decoding, ensuring both model efficiency and expressiveness.

\section{Complexity Analysis}
\label{apdx:efficiency}
In this section, we conduct a parameter count analysis and evaluate the running cost of MoLA through empirical investigation. 
Take iTransformer~\citep{itransformer} as the base model, let the number of layers in iTransformer be $\N_l$, the hidden dimension of the attention layer be $d_{\text{m}}$, and the hidden dimension of the FFN layer be $d_{\text{ff}}$.
Then, the number of parameters introduced by MoLA is given by:
\begin{equation}
\begin{aligned}
    \N_{\text{MoLA}} &= \N_l \times 2 \times (\N_{\text{LoRA}}\times \mP + \N_{\text{Weight}} \times \K) \\
    & = \N_l \times 2 \times ((d_{\text{m}} \times r + r \times d_{\text{ff}}) \times \mP + \mP \times \K),
\end{aligned}
\end{equation}
where $\N_{\text{LoRA}}$ and $\N_{\text{Weight}}$ are the number of parameters introduced by LoRA expert module and learnable weights for each expert, respectively.

Consider that iTransformer is a standard Transformer architecture with dimension permutation in the input sequence, its number of parameters can be given by:
\begin{equation}
\begin{aligned}
    \N_{\text{iTrans}} &= \N_l \times (\N_{\text{Att}} + \N_{\text{FFN}} + \N_{\text{LN}}) \\
    &= \N_l \times (4\times d_{\text{m}}^2 + 2\times d_{\text{m}} \times d_{\text{ff}} + 4\times d_{\text{m}}),
\end{aligned}
\end{equation}
where $\N_{\text{Att}}$, $\N_{\text{FFN}}$ and $\N_{\text{LN}}$ are the number of parameters introduced by self attention layer, feedforward layer and layer normalization layer, respectively.

The ratio between the two parameter counts is therefore:  
\begin{equation}
    \frac{\N_{\text{MoLA}}}{\N_{\text{iTrans}}} = \frac{\mP \times (d_{\text{m}}\times r + r \times d_{\text{ff}} + \K)}{2\times d_{\text{m}}\times (d_{\text{m}}+d_{\text{ff}} + 1)}.
\end{equation}

For example, with $\mP = 4$, $\K = 6$, $d_{\text{m}} = 512$, $d_{\text{ff}} = 1024$, and $r = 8$, the ratio $\frac{\N_{\text{MoLA}}}{\N_{\text{iTrans}}}$ is approximately 0.047, indicating a relevant small increase in model size.
Figure~\ref{fig:timing} shows MoLA's empirical running costs with varying $r$, $\mP$, and $\K$, capturing the sum time for weight matrix adaptation and base layer forward pass.
Our results confirm that the additional computational duration imposed by MoLA is lower than 1ms, with a small increase as the hyperparameters grow. Therefore, MoLA does not compromise the model's efficiency while effectively improving the model's performance.

\begin{figure}[h]
\begin{center}
\subfigure[Varying $r$ on ETTh1.]{\includegraphics[width=0.28\linewidth]{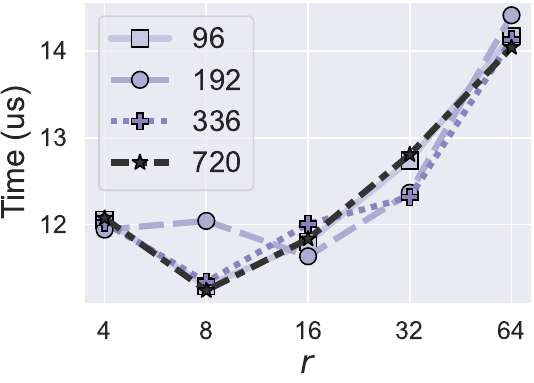}\label{subfig:timing-a}}
\hspace{8pt}
\subfigure[Varying $\mP$ on ETTh1.]{\includegraphics[width=0.28\linewidth]{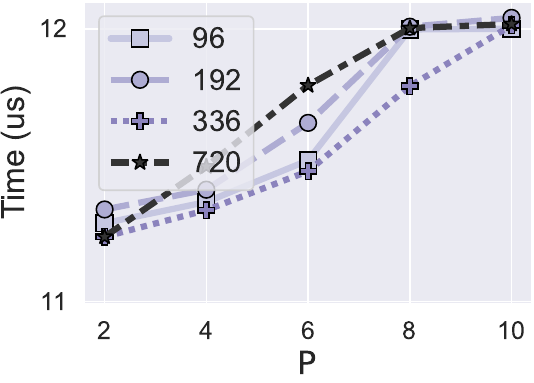}\label{subfig:timing-c}}
\hspace{8pt}
\subfigure[Varying $\K$ on ETTh1.]{\includegraphics[width=0.28\linewidth]{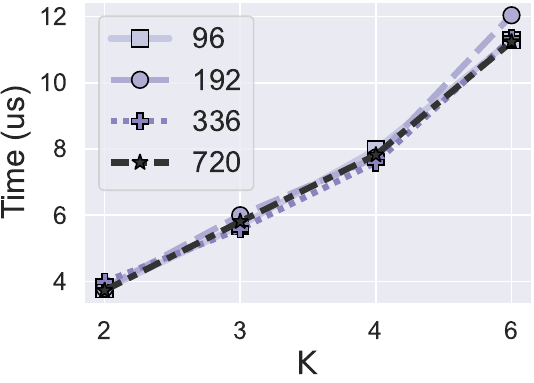}\label{subfig:timing-e}}

\subfigure[Varying $r$ on Weather.]{\includegraphics[width=0.28\linewidth]{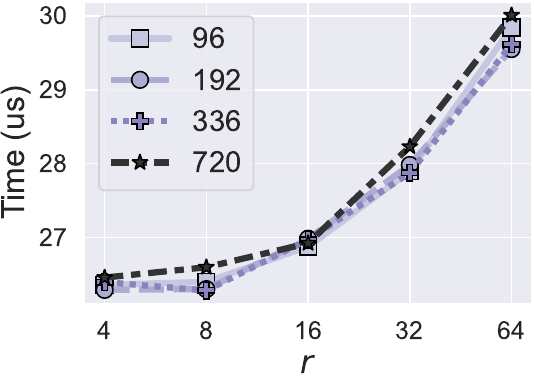}\label{subfig:timing-b}}
\hspace{8pt}
\subfigure[Varying $\mP$ on Weather.]{\includegraphics[width=0.28\linewidth]{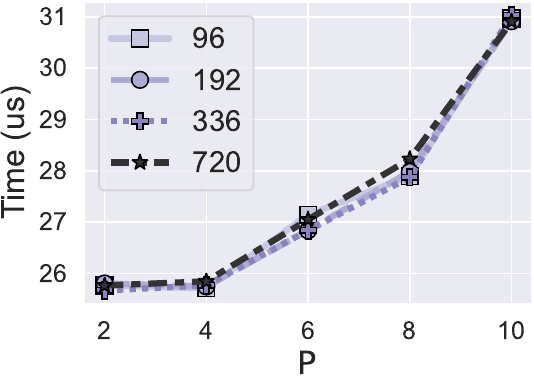}\label{subfig:timing-d}}
\hspace{8pt}
\subfigure[Varying $\K$ on Weather.]{\includegraphics[width=0.28\linewidth]{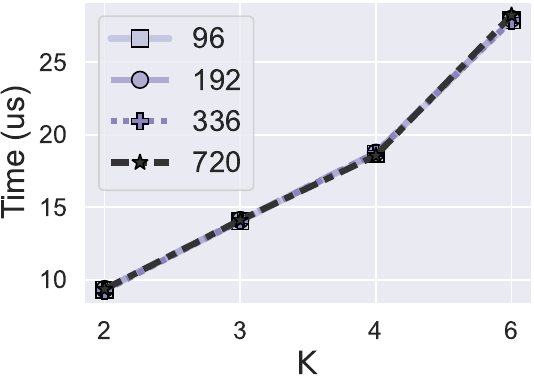}\label{subfig:timing-f}}
\caption{Running time of MoLA given varying rank $r$, the number of experts $\mP$ and the number of segments $\K$.}
\label{fig:timing}
\end{center}
\vskip -0.2in
\end{figure}

\section{Theoretical Justification}



\begin{theorem}[Expressiveness Bottleneck]
\label{apdx_thm:bottleneck_quantification}
Let $\bar{W} = [W \ b] \in \mathbb{R}^{\T \times (\mL+1)}$ be the parameters in the MT-F's linear decoder, $Y\in\mathbb{R}^{\T\times\mathrm{D}}$ be the label sequence;  the minimum attainable estimation error is
\begin{equation}
    \|\epsilon\|_F^2 = \sum_{t=\mathrm{rank}(\bar{W})+1}^{\T} \|U_t^\top Y\|_2^2,
\end{equation}
where $\bar{W} = U\Sigma V^\top$ is the singular value decomposition of $\bar{W}$,
$\mathrm{rank}(\bar{W}) \leq \min\{\T, \mL+1\}$, and $\{U_i\}_{i=\mathrm{rank}+1}^{\T}$ form an orthonormal basis for the null space of $\bar{W}$. Notably, this error is independent of the representation $R$ provided by encoder.
\end{theorem}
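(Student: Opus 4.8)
The plan is to recast the MT-F forecast as a single matrix product and reduce the error minimization to an orthogonal projection whose residual is read off from the SVD. First I would absorb the bias by defining the augmented representation $\bar{R} = \begin{bmatrix} R \\ \mathbf{1}^\top \end{bmatrix} \in \mathbb{R}^{(\mL+1)\times\mathrm{D}}$, so that the decoder output is $\hat{Y} = W R + b\,\mathbf{1}^\top = \bar{W}\bar{R}$. The crucial structural observation is that each column of $\hat{Y}$ is a linear combination of the columns of $\bar{W}$, hence $\hat{Y}$ is confined to the set $\{M : \mathrm{col}(M) \subseteq \mathrm{col}(\bar{W})\}$, a per-column subspace of dimension $\mathrm{rank}(\bar{W}) \le \min\{\T,\mL+1\}$. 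This confinement holds no matter how the encoder produces $R$, which is the origin of the independence claim.

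Second, I would lower-bound $\|Y - \hat{Y}\|_F^2$ by the distance from $Y$ to this subspace. Let $P$ be the orthogonal projector onto $\mathrm{col}(\bar{W})$ applied columnwise. Since every admissible $\hat{Y}$ has columns in $\mathrm{col}(\bar{W})$, the difference $PY - \hat{Y}$ lies in $\mathrm{col}(\bar{W})$ while $(I-P)Y$ lies in its orthogonal complement, giving the Pythagorean identity $\|Y - \hat{Y}\|_F^2 = \|(I-P)Y\|_F^2 + \|PY - \hat{Y}\|_F^2$. This is minimized precisely when $\hat{Y} = PY$, leaving the irreducible residual $\|(I-P)Y\|_F^2$.

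Third, I would translate the projector into SVD coordinates. Writing $\bar{W} = U\Sigma V^\top$, the left singular vectors $\{U_i\}_{i=1}^{\mathrm{rank}}$ span $\mathrm{col}(\bar{W})$ and $\{U_i\}_{i=\mathrm{rank}+1}^{\T}$ span its orthogonal complement in $\mathbb{R}^{\T}$ (the left null space), so $I - P = \sum_{t=\mathrm{rank}+1}^{\T} U_t U_t^\top$. Substituting and using orthonormality of the $U_t$ yields $\|(I-P)Y\|_F^2 = \sum_{t=\mathrm{rank}+1}^{\T} \|U_t^\top Y\|_2^2$, the claimed expression, which depends only on $\bar{W}$ and $Y$. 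Strict positivity then follows whenever $\mL+1 < \T$: in that regime $\mathrm{rank}(\bar{W}) < \T$, so the sum contains at least one term, nonzero for any $Y$ with a component outside $\mathrm{col}(\bar{W})$.

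The main obstacle I anticipate is the bias in the achievability (equality) direction. Because the last row of $\bar{R}$ is pinned to $\mathbf{1}^\top$, the genuinely reachable set is the affine family $b\,\mathbf{1}^\top + \{WR : R\}$ rather than the full subspace $\mathrm{col}(\bar{W})$, so the projection target $PY$ need not be exactly representable by a legitimate encoder output. The lower bound $\|Y - \hat{Y}\|_F^2 \ge \sum_{t=\mathrm{rank}+1}^{\T} \|U_t^\top Y\|_2^2$ nonetheless holds unconditionally, since the reachable set is contained in $\mathrm{col}(\bar{W})$; to recover the stated equality I would invoke the idealized-encoder assumption that $R$ can realize the optimal augmented representation (equivalently, that a constant feature is available to absorb the bias), making $\hat{Y} = PY$ attainable. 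I would flag explicitly that the $\{U_i\}_{i>\mathrm{rank}}$ form a basis of the left null space, i.e.\ the orthogonal complement of $\mathrm{col}(\bar{W})$, to keep the SVD bookkeeping unambiguous.
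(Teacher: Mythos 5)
Your proof is correct, and it reaches the paper's central identity --- the optimal residual is the component of $Y$ orthogonal to $\mathrm{col}(\bar{W})$, read off from the SVD as $\sum_{t=\mathrm{rank}(\bar{W})+1}^{\T}\|U_t^\top Y\|_2^2$ --- by a somewhat different and more robust route. The paper obtains the projector analytically: it poses least squares over $\bar{R}$, solves the normal equations under the explicit assumption that $\bar{W}^\top\bar{W}$ is invertible, and then writes $P=\bar{W}(\bar{W}^\top\bar{W})^{-1}\bar{W}^\top=U_oU_o^\top$. Your Pythagorean/projection argument never inverts anything, so it remains valid in the rank-deficient case $\mathrm{rank}(\bar{W})<\mL+1$, which the theorem statement permits but the paper's normal-equation step silently excludes; it also treats the $\D$-column label matrix $Y$ directly, whereas the paper works with a single augmented column $\bar{R}\in\mathbb{R}^{(\mL+1)\times 1}$. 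Your closing caveat about the bias is also on target and identifies something the paper does not acknowledge: the paper defines $\bar{R}=[R^\top,1]^\top$ but then minimizes over all of $\mathbb{R}^{\mL+1}$, performing exactly the relaxation you describe (the constraint that the last coordinate equal $1$ is dropped). Consequently, in both proofs the displayed quantity is, strictly speaking, an unconditional lower bound on the error reachable by a legitimate encoder, attained under the idealized assumption that the encoder can realize the optimal augmented representation; you state this explicitly, the paper does not. Your remark that $\{U_i\}_{i>\mathrm{rank}}$ span the left null space of $\bar{W}$ (the orthogonal complement of $\mathrm{col}(\bar{W})$), rather than the null space as the theorem loosely says, is likewise a correct tightening of the bookkeeping.
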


\begin{proof}
Consider the least squares estimation, the aim of linear decoder is to find $\bar{R}=[R^\top, 1]^\top \in \mathbb{R}^{(\mL + 1) \times 1}$ such that:
\begin{align*}
    \hat{\bar{R}} = \arg\min_{\bar{R}} \|Y - \bar{W} \bar{R}\|_2^2.
\end{align*}

To solve this optimization problem, we set the gradient of the cost function with respect to $\bar{R}$ to zero:
\begin{align*}
    \frac{\partial}{\partial \bar{R}} \|Y - \bar{W} \bar{R}\|_2^2 = -2 \bar{W}^\top (Y - \bar{W} \bar{R}) = 0.
\end{align*}

Assuming $\bar{W}^\top \bar{W}$ is invertible, we have:
\begin{align*}
\bar{R} = (\bar{W}^\top \bar{W})^{-1} \bar{W}^\top Y.
\end{align*}

Then, the estimation error is:
\begin{align*}
\epsilon = Y - \hat{Y} = \left( I - \bar{W} (\bar{W}^\top \bar{W})^{-1} \bar{W}^\top \right) Y = (I-P)Y,
\end{align*}
where $P = \bar{W} (\bar{W}^\top \bar{W})^{-1} \bar{W}^\top$ is known as the projection matrix onto the column space of $\bar{W}$. Therefore, $I - P$ projects onto the orthogonal complement of the column space of $\bar{W}$.

Let the Singular Value Decomposition (SVD) of \( \bar{W} \) be
\begin{align*}
    \bar{W} = U \Sigma V^\top,
\end{align*}
where \( U \in \mathbb{R}^{\T \times \T} \) is orthogonal, \( \Sigma \in \mathbb{R}^{\T \times (\mL+1)} \) is diagonal, and \( V \in \mathbb{R}^{(\mL+1) \times (\mL+1)} \) is orthogonal.

Partition \( \Sigma \) and \( U \) as
\[
\Sigma = \begin{bmatrix} \Sigma_o & 0 \\ 0 & 0 \end{bmatrix}, \quad U = [U_o \quad U_n],
\]
where \( \Sigma_o \in \mathbb{R}^{o \times o} \) contains the positive singular values, \( U_o \in \mathbb{R}^{\T \times o} \) contains the corresponding left singular vectors, and \( U_n \in \mathbb{R}^{\T \times (\T - o)} \) contains the left singular vectors corresponding to zero singular values, and $o=\mathrm{rank}(\bar{W})$.

The projection matrix onto the column space of \( \bar{W} \) is
\begin{align*}
    P = \bar{W} (\bar{W}^\top \bar{W})^{-1} \bar{W}^\top = U_o U_o^\top.
\end{align*}
Therefore, the estimation error is
\begin{align*}
    \epsilon = (I - P) Y = U_n U_n^\top Y.
\end{align*}
The minimum attainable estimation error is
\begin{align*}
    \|\epsilon\|_F^2 = \| U_n^\top Y \|_2^2 = \sum_{t = \mathrm{rank}(\bar{W})+1}^{\T} \left\| U_t^\top Y \right\|_2^2.
\end{align*}

\end{proof}

\begin{theorem}[Variance Reduction of MoLA]
\label{apdx_thm:variance_reduction}
$\mathcal{L}_{\text{MoLA}}$ has a smaller variance than $\mathcal{L}_{\text{MT-F}}$,
\begin{equation}
    \operatorname{Var}(\mathcal{L}_{\text{MoLA}}) \leq \operatorname{Var}(\mathcal{L}_{\text{MT-F}}).
\end{equation}
\end{theorem}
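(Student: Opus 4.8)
The plan is to first pin down the probabilistic model under which $\mathcal{L}_{\text{MoLA}}$ and $\mathcal{L}_{\text{MT-F}}$ are treated as random variables, since the statement compares their variances. The natural choice is to take $\operatorname{Var}$ over the data distribution of the input--label pair $(L,Y)$ (equivalently, over the per-segment residuals it induces), and to view each LoRA expert's contribution $B^{(p)}A^{(p)}$ as inducing a prediction/loss summand. The central structural fact I would exploit is that, by \eqref{eq:MoLA-forward}, the MoLA update is a \emph{normalized} convex combination $\sum_{p=1}^{\mP}\Delta_k^{(p)}B^{(p)}A^{(p)}$ of the $\mP$ experts, whereas the MT-F decoder commits to a single, effectively one-hot configuration on a shared representation. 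This reduces the claim to a comparison between the variance of a convex mixture and that of a single component, which is the canonical mechanism behind ensemble/averaging variance reduction and is exactly what the ``Mixture-of-LoRA'' design is meant to induce.

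I would then carry out the argument in three steps. First, I write each per-segment loss contribution of MoLA through expert-wise quantities $Z^{(p)}$; because a squared loss is involved, I would \emph{not} push the combination through the norm via Jensen (which controls only the mean), but instead work with the residual vectors directly so that $\mathcal{L}_{\text{MoLA}}$ is expressed in terms of $\sum_p \Delta_k^{(p)} Z^{(p)}$ with the $Z^{(p)}$ sharing a common second-order structure. Second, I expand $\operatorname{Var}\!\big(\sum_p \Delta_k^{(p)} Z^{(p)}\big)=\sum_{p,q}\Delta_k^{(p)}\Delta_k^{(q)}\operatorname{Cov}(Z^{(p)},Z^{(q)})$ and invoke the normalization $\sum_p \Delta_k^{(p)}=1$ with $\Delta_k^{(p)}\ge 0$, which yields the key pointwise bound $\sum_p (\Delta_k^{(p)})^2\le 1$. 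Third, under an exchangeability or non-positive cross-correlation hypothesis on the experts, the off-diagonal covariance terms do not inflate the sum, so the mixture variance is dominated by the single-component variance that corresponds to the MT-F case; summing this over the $\K$ segments to assemble the full-horizon loss then gives $\operatorname{Var}(\mathcal{L}_{\text{MoLA}})\le \operatorname{Var}(\mathcal{L}_{\text{MT-F}})$.

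The hard part will be twofold. The more technical obstacle is lifting the prediction-level convex-combination inequality to the \emph{loss-level} variance: since the loss is a squared norm, the clean linear calculation applies to the residual vectors but must be transferred to the variance of their squared magnitude, which generally requires either controlling fourth-order moments or restricting to a regime (e.g.\ approximately Gaussian residuals, or a linearized loss) where the second-order computation is exact. The more conceptual obstacle is justifying independence or exchangeability of the expert contributions $Z^{(p)}$ so that cross-covariances are nonpositive or negligible; without such a hypothesis the inequality can fail, so I would state it explicitly and argue it is reasonable because MoLA trains the shared experts to capture complementary inter-segment structure. Once these modeling assumptions are fixed, I expect the second-order argument to go through, with the remaining work being routine bookkeeping over the $\K$ segments.
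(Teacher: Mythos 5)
Your route is genuinely different from the paper's, and it contains a gap that your own hedging does not cover. The paper's proof never touches the expert mixture at all: it writes the total loss as an average over forecast horizons, $\mathcal{L} = \frac{1}{\T}\sum_{t=1}^{\T}\mathcal{L}_t$, expands $\operatorname{Var}(\mathcal{L})$ into per-horizon variances plus cross-horizon covariances, and then \emph{assumes} that horizon-specific LoRA parameters reduce each cross-horizon covariance, i.e.\ $\Delta\operatorname{Cov}(\mathcal{L}_t,\mathcal{L}_s) = \operatorname{Cov}_{\text{MT-F}}(\mathcal{L}_t,\mathcal{L}_s) - \operatorname{Cov}_{\text{MoLA}}(\mathcal{L}_t,\mathcal{L}_s) \geq 0$, with the per-horizon variances implicitly unchanged; the conclusion then follows by subtraction. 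The variance reduction is thus attributed to decorrelation \emph{across time steps} caused by unsharing parameters --- essentially the opposite of your mechanism, which attributes it to averaging \emph{across experts} within a segment.

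Two concrete problems block your version. First, the convex combination in \eqref{eq:MoLA-forward} lives at the weight level, $M^{(k)} = M + \sum_{p=1}^{\mP} \Delta^{(p)}_k B^{(p)} A^{(p)}$, inside a nonlinear encoder; the segment residual is therefore \emph{not} of the form $\sum_p \Delta_k^{(p)} Z^{(p)}$ for per-expert residuals $Z^{(p)}$, so the very first step of your plan requires a linearization assumption you never state. Second, and more fundamentally, even granting that reduction, the ensemble inequality you would obtain bounds the mixture's variance by the variance of a \emph{single expert component} (under your exchangeability or nonpositive-correlation hypothesis). But $\mathcal{L}_{\text{MT-F}}$ is not the loss of any expert component: MT-F has no experts, no frozen foundation model, and a structurally different decoder, so your chain of inequalities never connects to the right-hand side of the theorem. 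Closing it would require the unjustified identification of MT-F with one expert of MoLA. To be fair, the paper's own argument is also assumption-driven --- the key inequality $\Delta\operatorname{Cov} \geq 0$ is posited rather than derived, with only an empirical remark about when equality holds --- but its assumption at least concerns the two objects the theorem actually compares, whereas your scaffolding compares MoLA against something that is not MT-F.
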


\begin{proof}
Consider the total loss expressed as the average of losses over the prediction horizons:
\begin{align*}
    \mathcal{L} = \frac{1}{\T} \sum_{t=1}^\T \mathcal{L}_t,
\end{align*}
where $\mathcal{L}_t$ is the MSE loss for the $t$-th prediction horizon.

The variance of the total loss is:
\begin{align*}
\begin{aligned}
    \operatorname{Var}(\mathcal{L}) &= \operatorname{Var}\left( \frac{1}{\T} \sum_{t=1}^\T \mathcal{L}_t \right) \\
    &= \frac{1}{\T^2} \left( \sum_{t=1}^\T \operatorname{Var}(\mathcal{L}_t) + 2 \sum_{1 \leq t < s \leq \T} \operatorname{Cov}(\mathcal{L}_t, \mathcal{L}_s) \right).
\end{aligned}
\end{align*}

Under modularized fine-tuning, the covariances between different $\mathcal{L}_t$ decrease because the LoRA modules introduce horizon-specific parameters, reducing parameter sharing. Let $\Delta\operatorname{Cov}(\mathcal{L}_t, \mathcal{L}_s) = \operatorname{Cov}_{\text{MT-F}}(\mathcal{L}_t, \mathcal{L}_s) - \operatorname{Cov}_{\text{MoLA}}(\mathcal{L}_t, \mathcal{L}_s) \geq 0$. The variance difference is then:
\begin{align*}
    \operatorname{Var}(\mathcal{L}_{\text{MT-F}}) - \operatorname{Var}(\mathcal{L}_{\text{MoLA}}) = \frac{2}{\T^2} \sum_{1 \leq t < s \leq \T} \Delta\operatorname{Cov}(\mathcal{L}_t, \mathcal{L}_s) \geq 0.
\end{align*}

Therefore,
\begin{align*}
\begin{aligned}
    \operatorname{Var}(\mathcal{L}_{\text{MoLA}}) &= \operatorname{Var}(\mathcal{L}_{\text{MT-F}}) - \frac{2}{\T^2} \sum_{1 \leq t < s \leq \T} \Delta\operatorname{Cov}(\mathcal{L}_t, \mathcal{L}_s) \\
    &\leq \operatorname{Var}(\mathcal{L}_{\text{MT-F}}).
\end{aligned}
\end{align*}

The equation realize only when all the LoRA modules do not contribute to the improvement of the prediction performance with corresponding time step.
However, we didn't recognize that phenomenon in our experiments.

\end{proof}

\section{Reproduction Details}


\subsection{Dataset Descriptions}
\label{apdx:dataset}

\begin{table}
  \caption{Detailed dataset descriptions. \textit{D} denotes the number of variates. \texttt{Forecast Length} denotes the prediction lengths investigated in this dataset. \texttt{Frequency} denotes the sampling interval of time points. \texttt{Train, Validation, Test} denotes the number of samples employed in each split. The taxonomy and statistic are aligned with the recent works~\citep{Timesnet,itransformer}.}\label{tab:dataset}
  \centering
  \renewcommand{\multirowsetup}{\centering}
  \setlength{\tabcolsep}{10pt}
  \footnotesize
  \begin{threeparttable}
  \begin{tabular}{llllll}
    \toprule
    Dataset & D & Forecast Length & Split Ratio & Frequency& Domain \\
    \toprule
    ETTm1 & 7 & 96,192,336,720 & 34465/11521/11521 & 15mins & Electricity\\
    \midrule
    ETTm2 & 7 & 96,192,336,720 & 34465/11521/11521 & 15mins & Electricity\\
    \midrule
    ETTh1 & 7 & 96,192,336,720 & 8545/2881/2881 & Hourly & Electricity\\
    \midrule
    ETTh2 & 7 & 96,192,336,720 & 8545/2881/2881 & Hourly & Electricity\\
    \midrule
    ECL & 321 & 96,192,336,720 & 18317/2633/5261 & Hourly & Electricity \\
    \midrule
    Traffic & 862 & 96,192,336,720 & 12185/1757/3509 & Hourly & Transportation \\
    \midrule
    Weather & 21 & 96,192,336,720 & 36792/5271/10540 & 10mins & Weather\\
    \midrule
    PEMS03 & 358 & 12,24,36,48 & 15617/5135/5135 & 5mins & Transportation\\
    \midrule
    PEMS08 & 170 & 12,24,36,48 & 10690/3548/265 & 5mins & Transportation\\
    \bottomrule
  \end{tabular}
  \end{threeparttable}
\end{table}

The datasets used in this study span a variety of domains and time resolutions, each with distinct characteristics that are well-suited for evaluating time-series forecasting:
\begin{itemize}[leftmargin=*]
\item \textbf{ETT}~\citep{Informer}: This dataset consists of data from 7 key variables related to electricity transformers, collected between July 2016 and July 2018. It includes four subsets: ETTh1 and ETTh2, which are recorded hourly, and ETTm1 and ETTm2, which are recorded every 15 minutes.
\item \textbf{ECL (Electricity Consumption Load)}~\citep{Autoformer}: This dataset contains hourly electricity consumption data from 321 clients, offering insights into power usage patterns over time.
\item \textbf{Traffic}~\citep{Autoformer}: This dataset comprises hourly road occupancy rates collected by 862 sensors deployed across the freeways in the San Francisco Bay Area. The data spans the period from January 2015 to December 2016, reflecting traffic conditions over time.
\item \textbf{Weather}~\citep{Autoformer}: This dataset includes 21 meteorological variables, recorded every 10 minutes throughout 2020 at the Max Planck Biogeochemistry Institute's weather station. It provides a comprehensive set of climate-related factors for forecasting purposes.
\item \textbf{PEMS}~\citep{SCINet}: This dataset consists of public traffic data from the California highway network, with recordings collected every 5 minutes. We utilize two subsets in this study: PEMS03 and PEMS08, which are frequently adopted in traffic forecasting benchmarks.
\end{itemize}

For all datasets, the data processing and division into training, validation, and test sets follow the protocols established by TimesNet~\citep{Timesnet} and iTransformer~\citep{itransformer}, ensuring a consistent chronological split to avoid data leakage. The standardized lookback window is set at $96$ for the ETT, ECL, Traffic, Weather, and PEMS datasets. Prediction horizons vary across datasets, with forecasting lengths of $96$, $192$, $336$, and $720$ for the first five datasets, and shorter horizons of $12$, $24$, $36$, and $48$ for the PEMS subsets. Detailed specifications of each dataset can be found in Table~\ref{tab:dataset}.

\subsection{Implementation Details}
\label{apdx:imple}
The baseline models in this study were carefully reproduced using training scripts from the TimesNet Repository~\citep{Timesnet}, ensuring full reproducibility verification. All models were trained using the Adam optimizer~\citep{Adam}, with learning rates selected from $\{10^{-3}, 5\times 10^{-4}, 10^{-4}\}$ based on minimizing the MSE loss. A batch size of 32 was maintained consistently across all experiments. Training was performed for a maximum of 10 epochs, with an early stopping criterion triggered if no improvement in validation performance was observed for 3 consecutive epochs.

For the experiments integrating MoLA into existing forecasting models, we strictly adhered to the original hyperparameter settings as outlined in the respective publications during the pre-training phase. The forecasting horizon of pre-trained models were set to $\T / \K$, where $\K \in \{2, 3, 4, 6\}$. Pre-training was limited to a maximum of 5 epochs, with early stopping applied after 2 epochs without improvement.
During the fine-tuning phase, only the rank $r$ in the LoRA expert module, the learning rate $\eta$ and the number of LoRA expert modules $\mP$ were tuned, as these parameters are crucial for adjusting the differences in weight magnitudes between the base models and the LoRA expert modules. Fine-tuning was performed to minimize the MSE averaged over all prediction lengths, with forecasting segments selected from range $\{2, 3, 4, 6\}$. The current results sufficiently demonstrate the effectiveness of MoLA, showing that its efficacy is not dependent on highly specific hyperparameter configurations.

\section{More Experimental Results}
\subsection{Overall Performance}
\begin{table}
  \caption{Full results on the multi-step forecasting task. The length of history window is set to 96 for all baselines. \texttt{Avg} indicates the results averaged over forecasting lengths: T=96, 192, 336 and 720.}\label{tab:multistep_app_full}
  \renewcommand{\arraystretch}{0.8}
  \setlength{\tabcolsep}{1.6pt}
  \tiny
  \centering
  \renewcommand{\multirowsetup}{\centering}
  \begin{tabular}{c|c|cc|cc|cc|cc|cc|cc|cc|cc|cc|cc|cc}
    \toprule
    \multicolumn{2}{l}{\multirow{2}{*}{\rotatebox{0}{\scaleb{Models}}}} & 
    \multicolumn{2}{c}{\rotatebox{0}{\scaleb{\textbf{MoLA}}}} &
    \multicolumn{2}{c}{\rotatebox{0}{\scaleb{iTransformer}}} &
    \multicolumn{2}{c}{\rotatebox{0}{\scaleb{FreTS}}} &
    \multicolumn{2}{c}{\rotatebox{0}{\scaleb{TimesNet}}} &
    \multicolumn{2}{c}{\rotatebox{0}{\scaleb{TiDE}}} &
    \multicolumn{2}{c}{\rotatebox{0}{\scaleb{DLinear}}} &
    \multicolumn{2}{c}{\rotatebox{0}{\scaleb{FEDformer}}} &
    \multicolumn{2}{c}{\rotatebox{0}{\scaleb{Autoformer}}} &
    \multicolumn{2}{c}{\rotatebox{0}{\scaleb{Informer}}} &
    \multicolumn{2}{c}{\rotatebox{0}{\scaleb{Transformer}}} &
    \multicolumn{2}{c}{\rotatebox{0}{\scaleb{TCN}}} \\
    \multicolumn{2}{c}{} &
    \multicolumn{2}{c}{\scaleb{\textbf{(Ours)}}} & 
    \multicolumn{2}{c}{\scaleb{(2024)}} & 
    \multicolumn{2}{c}{\scaleb{(2023)}} & 
    \multicolumn{2}{c}{\scaleb{(2023)}} & 
    \multicolumn{2}{c}{\scaleb{(2023)}} & 
    \multicolumn{2}{c}{\scaleb{(2023)}} & 
    \multicolumn{2}{c}{\scaleb{(2022)}} &
    \multicolumn{2}{c}{\scaleb{(2021)}} &
    \multicolumn{2}{c}{\scaleb{(2021)}} &
    \multicolumn{2}{c}{\scaleb{(2017)}} &
    \multicolumn{2}{c}{\scaleb{(2017)}} \\
    \cmidrule(lr){3-4} \cmidrule(lr){5-6}\cmidrule(lr){7-8} \cmidrule(lr){9-10}\cmidrule(lr){11-12} \cmidrule(lr){13-14} \cmidrule(lr){15-16} \cmidrule(lr){17-18} \cmidrule(lr){19-20} \cmidrule(lr){21-22} \cmidrule(lr){23-24}
    \multicolumn{2}{l}{\rotatebox{0}{\scaleb{Metrics}}}  & \scalea{MSE} & \scalea{MAE}  & \scalea{MSE} & \scalea{MAE}  & \scalea{MSE} & \scalea{MAE}  & \scalea{MSE} & \scalea{MAE}  & \scalea{MSE} & \scalea{MAE}  & \scalea{MSE} & \scalea{MAE} & \scalea{MSE} & \scalea{MAE} & \scalea{MSE} & \scalea{MAE} & \scalea{MSE} & \scalea{MAE} & \scalea{MSE} & \scalea{MAE} & \scalea{MSE} & \scalea{MAE} \\
    \toprule
    
    \multirow{5}{*}{{\rotatebox{90}{\scalebox{0.95}{ETTm1}}}}
    & \scalea{96} 
    & \scalea{\bst{0.330}} & \scalea{\bst{0.364}} 
    & \scalea{0.346} & \scalea{0.379} 
    & \scalea{0.339} & \scalea{0.374} 
    & \scalea{\subbst{0.338}} & \scalea{0.379} 
    & \scalea{0.364} & \scalea{0.387} 
    & \scalea{0.345} & \scalea{\subbst{0.372}} 
    & \scalea{0.389} & \scalea{0.427} 
    & \scalea{0.468} & \scalea{0.463} 
    & \scalea{0.633} & \scalea{0.560}
    & \scalea{0.591} & \scalea{0.549} 
    & \scalea{0.887} & \scalea{0.613} \\
    
    & \scalea{192} 
    & \scalea{\bst{0.378}} & \scalea{\subbst{0.392}} 
    & \scalea{0.392} & \scalea{0.400} 
    & \scalea{0.382} & \scalea{0.397} 
    & \scalea{0.389} & \scalea{0.400} 
    & \scalea{0.398} & \scalea{0.404} 
    & \scalea{\subbst{0.381}} & \scalea{\bst{0.390}} 
    & \scalea{0.402} & \scalea{0.431} 
    & \scalea{0.573} & \scalea{0.509} 
    & \scalea{0.736} & \scalea{0.625}
    & \scalea{0.704} & \scalea{0.629} 
    & \scalea{0.877} & \scalea{0.626} \\
    
    & \scalea{336} 
    & \scalea{\bst{0.414}} & \scalea{\subbst{0.416}} 
    & \scalea{0.427} & \scalea{0.422} 
    & \scalea{0.421} & \scalea{0.426} 
    & \scalea{0.429} & \scalea{0.428} 
    & \scalea{0.428} & \scalea{0.425} 
    & \scalea{\subbst{0.414}} & \scalea{\bst{0.414}} 
    & \scalea{0.438} & \scalea{0.451} 
    & \scalea{0.596} & \scalea{0.527} 
    & \scalea{1.061} & \scalea{0.787}
    & \scalea{1.171} & \scalea{0.861} 
    & \scalea{0.890} & \scalea{0.636} \\
    
    & \scalea{720} 
    & \scalea{\subbst{0.480}} & \scalea{\subbst{0.453}} 
    & \scalea{0.494} & \scalea{0.461} 
    & \scalea{0.485} & \scalea{0.462} 
    & \scalea{0.495} & \scalea{0.464} 
    & \scalea{0.487} & \scalea{0.461} 
    & \scalea{\bst{0.473}} & \scalea{\bst{0.451}} 
    & \scalea{0.529} & \scalea{0.498} 
    & \scalea{0.749} & \scalea{0.569} 
    & \scalea{1.119} & \scalea{0.801}
    & \scalea{1.307} & \scalea{0.893} 
    & \scalea{0.911} & \scalea{0.653} \\
    \cmidrule(lr){2-24}
    
    & \scalea{Avg} 
    & \scalea{\bst{0.400}} & \scalea{\bst{0.406}} 
    & \scalea{0.415} & \scalea{0.416} 
    & \scalea{0.407} & \scalea{0.415} 
    & \scalea{0.413} & \scalea{0.418} 
    & \scalea{0.419} & \scalea{0.419} 
    & \scalea{\subbst{0.404}} & \scalea{\subbst{0.407}} 
    & \scalea{0.440} & \scalea{0.451} 
    & \scalea{0.596} & \scalea{0.517}
    & \scalea{0.887} & \scalea{0.693}
    & \scalea{0.943} & \scalea{0.733} 
    & \scalea{0.891} & \scalea{0.632} \\
    \midrule

    \multirow{5}{*}{{\rotatebox{90}{\scalebox{0.95}{ETTm2}}}}
    & \scalea{96}  
    & \scalea{\bst{0.181}} & \scalea{\bst{0.262}} 
    & \scalea{\subbst{0.184}} & \scalea{0.266} 
    & \scalea{0.190} & \scalea{0.282} 
    & \scalea{0.185} & \scalea{\subbst{0.264}} 
    & \scalea{0.207} & \scalea{0.305} 
    & \scalea{0.195} & \scalea{0.294} 
    & \scalea{0.194} & \scalea{0.284} 
    & \scalea{0.240} & \scalea{0.319} 
    & \scalea{0.541} & \scalea{0.581}
    & \scalea{0.317} & \scalea{0.408} 
    & \scalea{3.125} & \scalea{1.345} \\
    
    & \scalea{192} 
    & \scalea{\bst{0.247}} & \scalea{\bst{0.307}} 
    & \scalea{0.257} & \scalea{0.315} 
    & \scalea{0.260} & \scalea{0.329} 
    & \scalea{\subbst{0.254}} & \scalea{\subbst{0.307}} 
    & \scalea{0.290} & \scalea{0.364} 
    & \scalea{0.283} & \scalea{0.359} 
    & \scalea{0.264} & \scalea{0.324} 
    & \scalea{0.300} & \scalea{0.349} 
    & \scalea{0.527} & \scalea{0.558}
    & \scalea{1.069} & \scalea{0.758} 
    & \scalea{3.130} & \scalea{1.350} \\
    
    & \scalea{336} 
    & \scalea{\bst{0.312}} & \scalea{\subbst{0.347}}
    & \scalea{0.315} & \scalea{0.351} 
    & \scalea{0.373} & \scalea{0.405} 
    & \scalea{\subbst{0.314}} & \scalea{\bst{0.345}} 
    & \scalea{0.377} & \scalea{0.422} 
    & \scalea{0.384} & \scalea{0.427} 
    & \scalea{0.319} & \scalea{0.359} 
    & \scalea{0.339} & \scalea{0.375} 
    & \scalea{1.126} & \scalea{0.797}
    & \scalea{1.325} & \scalea{0.869} 
    & \scalea{3.185} & \scalea{1.375} \\
    
    & \scalea{720} 
    & \scalea{\bst{0.407}} & \scalea{\bst{0.403}} 
    & \scalea{\subbst{0.419}} & \scalea{\subbst{0.409}} 
    & \scalea{0.517} & \scalea{0.499} 
    & \scalea{0.434} & \scalea{0.413} 
    & \scalea{0.558} & \scalea{0.524} 
    & \scalea{0.516} & \scalea{0.502} 
    & \scalea{0.430} & \scalea{0.424} 
    & \scalea{0.423} & \scalea{0.421} 
    & \scalea{2.828} & \scalea{1.268}
    & \scalea{2.576} & \scalea{1.223} 
    & \scalea{4.203} & \scalea{1.658} \\
    \cmidrule(lr){2-24}
    
    & \scalea{Avg} 
    & \scalea{\bst{0.287}} & \scalea{\bst{0.330}} 
    & \scalea{\subbst{0.294}} & \scalea{0.335} 
    & \scalea{0.335} & \scalea{0.379} 
    & \scalea{0.297} & \scalea{\subbst{0.332}} 
    & \scalea{0.358} & \scalea{0.404}
    & \scalea{0.344} & \scalea{0.396} 
    & \scalea{0.302} & \scalea{0.348} 
    & \scalea{0.326} & \scalea{0.366} 
    & \scalea{1.256} & \scalea{0.801}
    & \scalea{1.322} & \scalea{0.814} 
    & \scalea{3.411} & \scalea{1.432} \\
    \midrule

    \multirow{5}{*}{\rotatebox{90}{{\scalebox{0.95}{ETTh1}}}}
    & \scalea{96} 
    & \scalea{\subbst{0.379}} & \scalea{\bst{0.400}} 
    & \scalea{0.390} & \scalea{\subbst{0.410}} 
    & \scalea{0.399} & \scalea{0.412} 
    & \scalea{0.422} & \scalea{0.433} 
    & \scalea{0.479} & \scalea{0.464} 
    & \scalea{0.396} & \scalea{0.410} 
    & \scalea{\bst{0.377}} & \scalea{0.418} 
    & \scalea{0.423} & \scalea{0.441} 
    & \scalea{0.920} & \scalea{0.745}
    & \scalea{0.796} & \scalea{0.691} 
    & \scalea{0.767} & \scalea{0.633} \\
    
    & \scalea{192} 
    & \scalea{\subbst{0.436}} & \scalea{\bst{0.432}} 
    & \scalea{0.443} & \scalea{\subbst{0.441}} 
    & \scalea{0.453} & \scalea{0.443} 
    & \scalea{0.465} & \scalea{0.457} 
    & \scalea{0.521} & \scalea{0.503} 
    & \scalea{0.449} & \scalea{0.444} 
    & \scalea{\bst{0.421}} & \scalea{0.445} 
    & \scalea{0.498} & \scalea{0.485} 
    & \scalea{0.998} & \scalea{0.781}
    & \scalea{0.813} & \scalea{0.699} 
    & \scalea{0.739} & \scalea{0.619} \\
    
    & \scalea{336} 
    & \scalea{\subbst{0.477}} & \scalea{\bst{0.456}} 
    & \scalea{0.480} & \scalea{\subbst{0.457}} 
    & \scalea{0.503} & \scalea{0.475} 
    & \scalea{0.492} & \scalea{0.470} 
    & \scalea{0.659} & \scalea{0.603} 
    & \scalea{0.487} & \scalea{0.465} 
    & \scalea{\bst{0.468}} & \scalea{0.472} 
    & \scalea{0.506} & \scalea{0.496} 
    & \scalea{1.091} & \scalea{0.812}
    & \scalea{1.181} & \scalea{0.876} 
    & \scalea{0.717} & \scalea{0.613} \\
    
    & \scalea{720} 
    & \scalea{\subbst{0.480}} & \scalea{\bst{0.478}} 
    & \scalea{0.484} & \scalea{\subbst{0.479}} 
    & \scalea{0.596} & \scalea{0.565} 
    & \scalea{0.532} & \scalea{0.502} 
    & \scalea{0.893} & \scalea{0.736} 
    & \scalea{0.516} & \scalea{0.513} 
    & \scalea{0.500} & \scalea{0.493} 
    & \scalea{\bst{0.477}} & \scalea{0.487} 
    & \scalea{1.247} & \scalea{0.887}
    & \scalea{1.182} & \scalea{0.885} 
    & \scalea{0.828} & \scalea{0.678} \\
    \cmidrule(lr){2-24}
    
    & \scalea{Avg} 
    & \scalea{\subbst{0.443}} & \scalea{\bst{0.441}} 
    & \scalea{0.449} & \scalea{\subbst{0.447}} 
    & \scalea{0.488} & \scalea{0.474} 
    & \scalea{0.478} & \scalea{0.466} 
    & \scalea{0.628} & \scalea{0.574} 
    & \scalea{0.462} & \scalea{0.458} 
    & \scalea{\bst{0.441}} & \scalea{0.457} 
    & \scalea{0.476} & \scalea{0.477} 
    & \scalea{1.064} & \scalea{0.806}
    & \scalea{0.993} & \scalea{0.788} 
    & \scalea{0.763} & \scalea{0.636} \\
    \midrule

    \multirow{5}{*}{\rotatebox{90}{{\scalebox{0.95}{ETTh2}}}}
    & \scalea{96}  
    & \scalea{\bst{0.293}} & \scalea{\bst{0.341}} 
    & \scalea{\subbst{0.301}} & \scalea{\subbst{0.349}} 
    & \scalea{0.350} & \scalea{0.403} 
    & \scalea{0.320} & \scalea{0.364} 
    & \scalea{0.400} & \scalea{0.440} 
    & \scalea{0.343} & \scalea{0.396} 
    & \scalea{0.347} & \scalea{0.391} 
    & \scalea{0.383} & \scalea{0.424} 
    & \scalea{2.340} & \scalea{1.220}
    & \scalea{2.072} & \scalea{1.140} 
    & \scalea{3.171} & \scalea{1.364} \\
    
    & \scalea{192} 
    & \scalea{\bst{0.377}} & \scalea{\bst{0.397}} 
    & \scalea{\subbst{0.382}} & \scalea{\subbst{0.402}} 
    & \scalea{0.472} & \scalea{0.475} 
    & \scalea{0.409} & \scalea{0.417} 
    & \scalea{0.528} & \scalea{0.509} 
    & \scalea{0.473} & \scalea{0.474} 
    & \scalea{0.430} & \scalea{0.443} 
    & \scalea{0.557} & \scalea{0.511} 
    & \scalea{6.284} & \scalea{2.078}
    & \scalea{5.081} & \scalea{1.814} 
    & \scalea{3.222} & \scalea{1.398} \\
    
    & \scalea{336} 
    & \scalea{\bst{0.421}} & \scalea{\bst{0.429}} 
    & \scalea{\subbst{0.430}} & \scalea{\subbst{0.434}} 
    & \scalea{0.564} & \scalea{0.528} 
    & \scalea{0.449} & \scalea{0.451} 
    & \scalea{0.643} & \scalea{0.571} 
    & \scalea{0.603} & \scalea{0.546} 
    & \scalea{0.469} & \scalea{0.475} 
    & \scalea{0.470} & \scalea{0.481} 
    & \scalea{4.824} & \scalea{1.853}
    & \scalea{3.564} & \scalea{1.475} 
    & \scalea{3.306} & \scalea{1.452} \\
    
    & \scalea{720} 
    & \scalea{\bst{0.418}} & \scalea{\bst{0.438}} 
    & \scalea{\subbst{0.447}} & \scalea{\subbst{0.455}} 
    & \scalea{0.815} & \scalea{0.654} 
    & \scalea{0.473} & \scalea{0.474} 
    & \scalea{0.874} & \scalea{0.679} 
    & \scalea{0.812} & \scalea{0.650} 
    & \scalea{0.473} & \scalea{0.480} 
    & \scalea{0.501} & \scalea{0.515} 
    & \scalea{3.985} & \scalea{1.724}
    & \scalea{2.469} & \scalea{1.247} 
    & \scalea{3.599} & \scalea{1.565} \\
    \cmidrule(lr){2-24}
    
    & \scalea{Avg} 
    & \scalea{\bst{0.377}} & \scalea{\bst{0.401}} 
    & \scalea{\subbst{0.390}} & \scalea{\subbst{0.410}} 
    & \scalea{0.550} & \scalea{0.515} 
    & \scalea{0.413} & \scalea{0.426} 
    & \scalea{0.611} & \scalea{0.550} 
    & \scalea{0.558} & \scalea{0.516} 
    & \scalea{0.430} & \scalea{0.447} 
    & \scalea{0.478} & \scalea{0.483} 
    & \scalea{4.358} & \scalea{1.719}
    & \scalea{3.296} & \scalea{1.419} 
    & \scalea{3.325} & \scalea{1.445} \\
    \midrule

    \multirow{5}{*}{{\rotatebox{90}{\scalebox{0.95}{ECL}}}} 
    & \scalea{96} 
    & \scalea{\bst{0.147}} & \scalea{\bst{0.237}} 
    & \scalea{\subbst{0.148}} & \scalea{\subbst{0.239}} 
    & \scalea{0.189} & \scalea{0.277} 
    & \scalea{0.171} & \scalea{0.273} 
    & \scalea{0.237} & \scalea{0.329} 
    & \scalea{0.210} & \scalea{0.302} 
    & \scalea{0.200} & \scalea{0.315} 
    & \scalea{0.199} & \scalea{0.315} 
    & \scalea{0.315} & \scalea{0.398}
    & \scalea{0.252} & \scalea{0.352} 
    & \scalea{0.688} & \scalea{0.621} \\
    
    & \scalea{192} 
    & \scalea{\bst{0.163}} & \scalea{\bst{0.254}} 
    & \scalea{\subbst{0.167}} & \scalea{\subbst{0.258}} 
    & \scalea{0.193} & \scalea{0.282} 
    & \scalea{0.188} & \scalea{0.289} 
    & \scalea{0.236} & \scalea{0.330} 
    & \scalea{0.210} & \scalea{0.305} 
    & \scalea{0.207} & \scalea{0.322} 
    & \scalea{0.215} & \scalea{0.327} 
    & \scalea{0.327} & \scalea{0.411}
    & \scalea{0.266} & \scalea{0.364} 
    & \scalea{0.587} & \scalea{0.582} \\
    
    & \scalea{336} 
    & \scalea{\bst{0.179}} & \scalea{\bst{0.270}} 
    & \scalea{\subbst{0.179}} & \scalea{\subbst{0.272}} 
    & \scalea{0.207} & \scalea{0.296} 
    & \scalea{0.208} & \scalea{0.304} 
    & \scalea{0.249} & \scalea{0.344} 
    & \scalea{0.223} & \scalea{0.319} 
    & \scalea{0.226} & \scalea{0.340} 
    & \scalea{0.232} & \scalea{0.343} 
    & \scalea{0.354} & \scalea{0.434}
    & \scalea{0.292} & \scalea{0.383} 
    & \scalea{0.590} & \scalea{0.588} \\
    
    & \scalea{720} 
    & \scalea{\bst{0.208}} & \scalea{\bst{0.297}} 
    & \scalea{\subbst{0.209}} & \scalea{\subbst{0.298}} 
    & \scalea{0.245} & \scalea{0.332}
    & \scalea{0.289} & \scalea{0.363} 
    & \scalea{0.284} & \scalea{0.373} 
    & \scalea{0.258} & \scalea{0.350} 
    & \scalea{0.282} & \scalea{0.379} 
    & \scalea{0.268} & \scalea{0.371} 
    & \scalea{0.343} & \scalea{0.423}
    & \scalea{0.287} & \scalea{0.371} 
    & \scalea{0.602} & \scalea{0.601} \\
    \cmidrule(lr){2-24}
    
    & \scalea{Avg} 
    & \scalea{\bst{0.174}} & \scalea{\bst{0.265}} 
    & \scalea{\subbst{0.176}} & \scalea{\subbst{0.267}} 
    & \scalea{0.209} & \scalea{0.297} 
    & \scalea{0.214} & \scalea{0.307} 
    & \scalea{0.251} & \scalea{0.344} 
    & \scalea{0.225} & \scalea{0.319} 
    & \scalea{0.229} & \scalea{0.339} 
    & \scalea{0.228} & \scalea{0.339} 
    & \scalea{0.335} & \scalea{0.416}
    & \scalea{0.274} & \scalea{0.367} 
    & \scalea{0.617} & \scalea{0.598} \\
    \midrule

    \multirow{5}{*}{{\rotatebox{90}{\scalebox{0.95}{Traffic}}}} 
    & \scalea{96} 
    & \scalea{\bst{0.396}} & \scalea{\bst{0.271}}  
    & \scalea{\subbst{0.397}} & \scalea{\subbst{0.272}} 
    & \scalea{0.528} & \scalea{0.341} 
    & \scalea{0.504} & \scalea{0.298} 
    & \scalea{0.805} & \scalea{0.493} 
    & \scalea{0.697} & \scalea{0.429} 
    & \scalea{0.577} & \scalea{0.362} 
    & \scalea{0.609} & \scalea{0.385} 
    & \scalea{0.698} & \scalea{0.390}
    & \scalea{0.686} & \scalea{0.385} 
    & \scalea{1.451} & \scalea{0.744} \\
    
    & \scalea{192} 
    & \scalea{\bst{0.416}} & \scalea{\bst{0.278}} 
    & \scalea{\subbst{0.418}} & \scalea{\subbst{0.279}} 
    & \scalea{0.531} & \scalea{0.338} 
    & \scalea{0.526} & \scalea{0.305} 
    & \scalea{0.756} & \scalea{0.474} 
    & \scalea{0.647} & \scalea{0.407} 
    & \scalea{0.603} & \scalea{0.372} 
    & \scalea{0.633} & \scalea{0.400} 
    & \scalea{0.697} & \scalea{0.386}
    & \scalea{0.679} & \scalea{0.377} 
    & \scalea{0.842} & \scalea{0.622} \\
    
    & \scalea{336} 
    & \scalea{\bst{0.425}} & \scalea{\bst{0.283}} 
    & \scalea{\subbst{0.432}} & \scalea{\subbst{0.286}} 
    & \scalea{0.551} & \scalea{0.345} 
    & \scalea{0.540} & \scalea{0.310} 
    & \scalea{0.762} & \scalea{0.477} 
    & \scalea{0.653} & \scalea{0.410} 
    & \scalea{0.615} & \scalea{0.378} 
    & \scalea{0.637} & \scalea{0.398} 
    & \scalea{0.715} & \scalea{0.397}
    & \scalea{0.663} & \scalea{0.361} 
    & \scalea{0.844} & \scalea{0.620} \\
    
    & \scalea{720} 
    & \scalea{\bst{0.464}} & \scalea{\bst{0.304}} 
    & \scalea{\subbst{0.467}} & \scalea{\subbst{0.305}} 
    & \scalea{0.598} & \scalea{0.367} 
    & \scalea{0.570} & \scalea{0.324} 
    & \scalea{0.719} & \scalea{0.449} 
    & \scalea{0.694} & \scalea{0.429} 
    & \scalea{0.649} & \scalea{0.403} 
    & \scalea{0.668} & \scalea{0.415} 
    & \scalea{0.797} & \scalea{0.443}
    & \scalea{0.693} & \scalea{0.381} 
    & \scalea{0.867} & \scalea{0.624} \\
    \cmidrule(lr){2-24}
    
    & \scalea{Avg} 
    & \scalea{\bst{0.425}} & \scalea{\bst{0.284}} 
    & \scalea{\subbst{0.428}} & \scalea{\subbst{0.286}} 
    & \scalea{0.552} & \scalea{0.348} 
    & \scalea{0.535} & \scalea{0.309} 
    & \scalea{0.760} & \scalea{0.473} 
    & \scalea{0.673} & \scalea{0.419} 
    & \scalea{0.611} & \scalea{0.379} 
    & \scalea{0.637} & \scalea{0.399} 
    & \scalea{0.727} & \scalea{0.404}
    & \scalea{0.680} & \scalea{0.376} 
    & \scalea{1.001} & \scalea{0.652} \\
    \midrule
    
    \multirow{5}{*}{{\rotatebox{90}{\scalebox{0.95}{Weather}}}}
    & \scalea{96}  
    & \scalea{\bst{0.173}} & \scalea{\bst{0.211}} 
    & \scalea{0.201} & \scalea{0.247} 
    & \scalea{0.184} & \scalea{0.239}
    & \scalea{\subbst{0.178}} & \scalea{\subbst{0.226}} 
    & \scalea{0.202} & \scalea{0.261} 
    & \scalea{0.197} & \scalea{0.259} 
    & \scalea{0.221} & \scalea{0.304} 
    & \scalea{0.284} & \scalea{0.355} 
    & \scalea{0.383} & \scalea{0.438}
    & \scalea{0.332} & \scalea{0.383} 
    & \scalea{0.610} & \scalea{0.568} \\
    
    & \scalea{192} 
    & \scalea{0.246} & \scalea{0.280}
    & \scalea{0.250} & \scalea{0.283} 
    & \scalea{\bst{0.223}} & \scalea{\subbst{0.275}} 
    & \scalea{\subbst{0.227}} & \scalea{\bst{0.266}} 
    & \scalea{0.242} & \scalea{0.298} 
    & \scalea{0.236} & \scalea{0.294} 
    & \scalea{0.275} & \scalea{0.345} 
    & \scalea{0.313} & \scalea{0.371} 
    & \scalea{0.415} & \scalea{0.449}
    & \scalea{0.634} & \scalea{0.539} 
    & \scalea{0.541} & \scalea{0.552} \\
    
    & \scalea{336} 
    & \scalea{\subbst{0.277}} & \scalea{\bst{0.296}} 
    & \scalea{0.302} & \scalea{0.317} 
    & \scalea{\bst{0.272}} & \scalea{0.316} 
    & \scalea{0.283} & \scalea{\subbst{0.305}} 
    & \scalea{0.287} & \scalea{0.335} 
    & \scalea{0.282} & \scalea{0.332} 
    & \scalea{0.338} & \scalea{0.379} 
    & \scalea{0.359} & \scalea{0.393} 
    & \scalea{0.618} & \scalea{0.551}
    & \scalea{0.656} & \scalea{0.579} 
    & \scalea{0.565} & \scalea{0.569} \\
    
    & \scalea{720} 
    & \scalea{0.367} & \scalea{\subbst{0.356}} 
    & \scalea{0.370} & \scalea{0.362} 
    & \scalea{\bst{0.340}} & \scalea{0.363} 
    & \scalea{0.359} & \scalea{\bst{0.355}} 
    & \scalea{0.351} & \scalea{0.386} 
    & \scalea{\subbst{0.347}} & \scalea{0.384} 
    & \scalea{0.408} & \scalea{0.418} 
    & \scalea{0.440} & \scalea{0.446} 
    & \scalea{0.963} & \scalea{0.726}
    & \scalea{0.908} & \scalea{0.706} 
    & \scalea{0.622} & \scalea{0.601} \\
    \cmidrule(lr){2-24}
    
    & \scalea{Avg}  
    & \scalea{0.266} & \scalea{\bst{0.286}} 
    & \scalea{0.281} & \scalea{0.302} 
    & \scalea{\bst{0.255}} & \scalea{0.299} 
    & \scalea{\subbst{0.262}} & \scalea{\subbst{0.288}} 
    & \scalea{0.271} & \scalea{0.320} 
    & \scalea{0.265} & \scalea{0.317} 
    & \scalea{0.311} & \scalea{0.361} 
    & \scalea{0.349} & \scalea{0.391} 
    & \scalea{0.595} & \scalea{0.541}
    & \scalea{0.632} & \scalea{0.552} 
    & \scalea{0.584} & \scalea{0.572} \\
    \midrule
    
    \multirow{5}{*}{{\rotatebox{90}{\scalebox{0.95}{PEMS03}}}}
    & \scalea{12}  
    & \scalea{\bst{0.067}} & \scalea{\bst{0.173}} 
    & \scalea{\subbst{0.069}} & \scalea{\subbst{0.175}} 
    & \scalea{0.083} & \scalea{0.194} 
    & \scalea{0.082} & \scalea{0.188} 
    & \scalea{0.117} & \scalea{0.225} 
    & \scalea{0.122} & \scalea{0.245} 
    & \scalea{0.123} & \scalea{0.248} 
    & \scalea{0.239} & \scalea{0.365} 
    & \scalea{0.122} & \scalea{0.226}
    & \scalea{0.107} & \scalea{0.209}
    & \scalea{0.632} & \scalea{0.606} \\
    
    & \scalea{24}  
    & \scalea{\bst{0.095}} & \scalea{\bst{0.205}} 
    & \scalea{\subbst{0.098}} & \scalea{\subbst{0.210}} 
    & \scalea{0.127} & \scalea{0.241} 
    & \scalea{0.110} & \scalea{0.216} 
    & \scalea{0.233} & \scalea{0.320} 
    & \scalea{0.202} & \scalea{0.320} 
    & \scalea{0.160} & \scalea{0.287} 
    & \scalea{0.492} & \scalea{0.506} 
    & \scalea{0.129} & \scalea{0.233}
    & \scalea{0.121} & \scalea{0.227} 
    & \scalea{0.655} & \scalea{0.626} \\
    
    & \scalea{36}  
    & \scalea{\bst{0.127}} & \scalea{\subbst{0.240}} 
    & \scalea{\subbst{0.131}} & \scalea{0.243} 
    & \scalea{0.169} & \scalea{0.281} 
    & \scalea{0.133} & \scalea{\bst{0.236}} 
    & \scalea{0.380} & \scalea{0.422} 
    & \scalea{0.275} & \scalea{0.382} 
    & \scalea{0.191} & \scalea{0.321} 
    & \scalea{0.399} & \scalea{0.459} 
    & \scalea{0.143} & \scalea{0.249}
    & \scalea{0.133} & \scalea{0.243} 
    & \scalea{0.678} & \scalea{0.644} \\
    
    & \scalea{48}  
    & \scalea{\subbst{0.159}} & \scalea{\subbst{0.270}} 
    & \scalea{0.164} & \scalea{0.275} 
    & \scalea{0.204} & \scalea{0.311} 
    & \scalea{\bst{0.146}} & \scalea{\bst{0.251}} 
    & \scalea{0.536} & \scalea{0.511} 
    & \scalea{0.335} & \scalea{0.429} 
    & \scalea{0.223} & \scalea{0.350} 
    & \scalea{0.875} & \scalea{0.723} 
    & \scalea{0.153} & \scalea{0.255}
    & \scalea{0.144} & \scalea{0.253} 
    & \scalea{0.699} & \scalea{0.659} \\
    \cmidrule(lr){2-24}
    
    & \scalea{Avg}  
    & \scalea{\bst{0.112}} & \scalea{\bst{0.222}} 
    & \scalea{\subbst{0.116}} & \scalea{0.226} 
    & \scalea{0.146} & \scalea{0.257} 
    & \scalea{0.118} & \scalea{\subbst{0.223}} 
    & \scalea{0.316} & \scalea{0.370} 
    & \scalea{0.233} & \scalea{0.344} 
    & \scalea{0.174} & \scalea{0.302} 
    & \scalea{0.501} & \scalea{0.513} 
    & \scalea{0.137} & \scalea{0.241}
    & \scalea{0.126} & \scalea{0.233} 
    & \scalea{0.666} & \scalea{0.634} \\
    \midrule

    \multirow{5}{*}{{\rotatebox{90}{\scalebox{0.95}{PEMS08}}}}
    & \scalea{12}  
    & \scalea{\bst{0.079}} & \scalea{\bst{0.181}} 
    & \scalea{\subbst{0.085}} & \scalea{\subbst{0.189}} 
    & \scalea{0.095} & \scalea{0.204} 
    & \scalea{0.110} & \scalea{0.209} 
    & \scalea{0.121} & \scalea{0.231} 
    & \scalea{0.152} & \scalea{0.274} 
    & \scalea{0.175} & \scalea{0.275} 
    & \scalea{0.446} & \scalea{0.483} 
    & \scalea{0.268} & \scalea{0.281}
    & \scalea{0.213} & \scalea{0.236} 
    & \scalea{0.680} & \scalea{0.607} \\
    
    & \scalea{24}  
    & \scalea{\bst{0.114}} & \scalea{\bst{0.218}} 
    & \scalea{\subbst{0.131}} & \scalea{\subbst{0.236}} 
    & \scalea{0.150} & \scalea{0.259} 
    & \scalea{0.142} & \scalea{0.239} 
    & \scalea{0.232} & \scalea{0.326} 
    & \scalea{0.245} & \scalea{0.350} 
    & \scalea{0.211} & \scalea{0.305} 
    & \scalea{0.488} & \scalea{0.509} 
    & \scalea{0.296} & \scalea{0.302}
    & \scalea{0.238} & \scalea{0.256} 
    & \scalea{0.701} & \scalea{0.622} \\
    
    & \scalea{36}  
    & \scalea{\bst{0.158}} & \scalea{\bst{0.256}} 
    & \scalea{0.182} & \scalea{0.282} 
    & \scalea{0.202} & \scalea{0.305} 
    & \scalea{\subbst{0.167}} & \scalea{\subbst{0.258}} 
    & \scalea{0.379} & \scalea{0.428} 
    & \scalea{0.344} & \scalea{0.417} 
    & \scalea{0.250} & \scalea{0.338} 
    & \scalea{0.532} & \scalea{0.513} 
    & \scalea{0.340} & \scalea{0.327}
    & \scalea{0.263} & \scalea{0.277} 
    & \scalea{0.727} & \scalea{0.637} \\
    
    & \scalea{48}  
    & \scalea{\subbst{0.203}} & \scalea{\subbst{0.290}} 
    & \scalea{0.236} & \scalea{0.323} 
    & \scalea{0.250} & \scalea{0.341} 
    & \scalea{\bst{0.195}} & \scalea{\bst{0.274}} 
    & \scalea{0.543} & \scalea{0.527} 
    & \scalea{0.437} & \scalea{0.469} 
    & \scalea{0.293} & \scalea{0.371} 
    & \scalea{1.052} & \scalea{0.781} 
    & \scalea{0.373} & \scalea{0.345}
    & \scalea{0.283} & \scalea{0.295} 
    & \scalea{0.746} & \scalea{0.648} \\
    \cmidrule(lr){2-24}
    
    & \scalea{Avg}  
    & \scalea{\bst{0.138}} & \scalea{\bst{0.236}} 
    & \scalea{0.159} & \scalea{0.258} 
    & \scalea{0.174} & \scalea{0.277} 
    & \scalea{\subbst{0.154}} & \scalea{\subbst{0.245}} 
    & \scalea{0.319} & \scalea{0.378} 
    & \scalea{0.294} & \scalea{0.377} 
    & \scalea{0.232} & \scalea{0.322} 
    & \scalea{0.630} & \scalea{0.572} 
    & \scalea{0.319} & \scalea{0.314}
    & \scalea{0.249} & \scalea{0.266} 
    & \scalea{0.713} & \scalea{0.629} \\
    \midrule
    
    \multicolumn{2}{c|}{\scalea{{$1^{\text{st}}$ Count}}} 
    & \scalea{\bst{33}} & \scalea{\bst{36}} 
    & \scalea{0} & \scalea{0} 
    & \scalea{\subbst{4}} & \scalea{0} 
    & \scalea{2} & \scalea{\subbst{6}} 
    & \scalea{0} & \scalea{0} 
    & \scalea{1} & \scalea{3} 
    & \scalea{\subbst{4}} & \scalea{0} 
    & \scalea{1} & \scalea{0} 
    & \scalea{0} & \scalea{0} 
    & \scalea{0} & \scalea{0} 
    & \scalea{0} & \scalea{0} \\
    \bottomrule
  \end{tabular}
\end{table}

We present a comprehensive comparison of the multi-step forecasting task in Table~\ref{tab:multistep_app_full}. The iTransformer model serves as the base model for implementing the MoLA paradigm. Despite iTransformer's initial performance gap compared to other state-of-the-art baseline models, integrating MoLA significantly enhances its forecasting accuracy. 

Specifically, MoLA achieves the lowest MSE in 33 out of 45 cases and the lowest MAE in 36 out of 45 cases, illustrating its effectiveness in enhancing model performance. The improvements brought by MoLA are particularly noticeable on challenging datasets such as ETTm1 and Traffic, where capturing long-term dependencies is crucial. These results underscore the robustness and adaptability of the MoLA paradigm.
While there are a few instances where MoLA does not achieve the top performance, this can be attributed to the inherent advantages of specific models in particular contexts. For example, FreTS shows competitive results on the Weather dataset, where its architectural design may better suit certain meteorological patterns. Nonetheless, the overall performance of MoLA demonstrates its strength as a fine-tuning framework for time-series forecasting, consistently mitigating the expressiveness bottleneck and delivering superior results across a wide range of forecasting tasks.

\subsection{Comparative Studies}\label{sec:comparative_app}
\begin{table}
  \caption{Results on the multi-step forecasting task with AR-F paradigm. iTransformer is served as the base and single-step model. \emph{Avg} indicates the results averaged over forecasting lengths: T=96, 192, 336 and 720 for ETT, ECL, Traffic and Weather dataset, T=12,24,36 and 48 for PEMS datasets.}\label{tab:arf}
  \centering
  \begin{threeparttable}
  \renewcommand{\arraystretch}{0.8}
  \setlength{\tabcolsep}{2pt}
  \scriptsize
  \renewcommand{\multirowsetup}{\centering}
  \begin{tabular}{c|c|cc|cc|cc|cc|cc|cc|cc|c|cc|cc}
    \toprule
    \multicolumn{2}{l}{\rotatebox{0}{\scaleb{Datasets}}} & 
    \multicolumn{2}{c}{\rotatebox{0}{\scaleb{ETTm1}}} &
    \multicolumn{2}{c}{\rotatebox{0}{\scaleb{ETTm2}}} &
    \multicolumn{2}{c}{\rotatebox{0}{\scaleb{ETTh1}}} &
    \multicolumn{2}{c}{\rotatebox{0}{\scaleb{ETTh2}}} &
    \multicolumn{2}{c}{\rotatebox{0}{\scaleb{ECL}}} &
    \multicolumn{2}{c}{\rotatebox{0}{\scaleb{Traffic}}} &
    \multicolumn{2}{c}{\rotatebox{0}{\scaleb{Weather}}} &
    \multicolumn{1}{c}{} &
    \multicolumn{2}{c}{\rotatebox{0}{\scaleb{PEMS03}}} &
    \multicolumn{2}{c}{\rotatebox{0}{\scaleb{PEMS08}}} \\
    \cmidrule(lr){3-4} \cmidrule(lr){5-6}\cmidrule(lr){7-8} \cmidrule(lr){9-10}\cmidrule(lr){11-12} \cmidrule(lr){13-14} \cmidrule(lr){15-16} \cmidrule(lr){18-19} \cmidrule(lr){20-21}
    \multicolumn{2}{l}{\rotatebox{0}{\scaleb{Metrics}}}  & \scalea{MSE} & \scalea{MAE}  & \scalea{MSE} & \scalea{MAE}  & \scalea{MSE} & \scalea{MAE}  & \scalea{MSE} & \scalea{MAE}  & \scalea{MSE} & \scalea{MAE}  & \scalea{MSE} & \scalea{MAE} & \scalea{MSE} & \multicolumn{1}{c}{\scalea{MAE}} & \multicolumn{1}{c}{} & \scalea{MSE} & \scalea{MAE} & \scalea{MSE} & \scalea{MAE} \\
    \midrule

    \multirow{5}{*}{{\rotatebox{90}{\scalebox{0.95}{AR-F}}}}
    & \scalea{96}
    & \scalea{0.371} & \scalea{0.385}
    & \scalea{0.200} & \scalea{0.278}
    & \scalea{0.454} & \scalea{0.441}
    & \scalea{0.339} & \scalea{0.368}
    & \scalea{0.307} & \scalea{0.360}
    & \scalea{0.590} & \scalea{0.364}
    & \scalea{0.192} & \scalea{0.234}
    & \scalea{12}
    & \scalea{0.089} & \scalea{0.204}
    & \scalea{0.088} & \scalea{0.195} \\
    
    & \scalea{192} 
    & \scalea{0.446} & \scalea{0.425} 
    & \scalea{0.262} & \scalea{0.315} 
    & \scalea{0.525} & \scalea{0.484} 
    & \scalea{0.418} & \scalea{0.413} 
    & \scalea{0.396} & \scalea{0.424} 
    & \scalea{0.603} & \scalea{0.385} 
    & \scalea{0.240} & \scalea{0.273} 
    & \scalea{24}
    & \scalea{0.177} & \scalea{0.287} 
    & \scalea{0.262} & \scalea{0.337} \\
    
    & \scalea{336} 
    & \scalea{0.493} & \scalea{0.461} 
    & \scalea{0.463} & \scalea{0.443} 
    & \scalea{0.580} & \scalea{0.518} 
    & \scalea{0.456} & \scalea{0.449} 
    & \scalea{0.550} & \scalea{0.513} 
    & \scalea{0.861} & \scalea{0.495} 
    & \scalea{0.368} & \scalea{0.357} 
    & \scalea{36}
    & \scalea{0.315} & \scalea{0.382} 
    & \scalea{0.562} & \scalea{0.486} \\
    
    & \scalea{720} 
    & \scalea{0.629} & \scalea{0.520} 
    & \scalea{0.430} & \scalea{0.418} 
    & \scalea{0.620} & \scalea{0.550} 
    & \scalea{0.448} & \scalea{0.458} 
    & \scalea{0.978} & \scalea{0.706} 
    & \scalea{1.341} & \scalea{0.588} 
    & \scalea{0.600} & \scalea{0.444} 
    & \scalea{48}
    & \scalea{0.515} & \scalea{0.487} 
    & \scalea{1.133} & \scalea{0.674} \\
    \cmidrule(lr){2-21}
    
    & \scalea{Avg} 
    & \scalea{0.485} & \scalea{0.448} 
    & \scalea{0.339} & \scalea{0.363} 
    & \scalea{0.545} & \scalea{0.498} 
    & \scalea{0.415} & \scalea{0.422} 
    & \scalea{0.558} & \scalea{0.501} 
    & \scalea{0.849} & \scalea{0.458} 
    & \scalea{0.350} & \scalea{0.327} 
    & \scalea{Avg}
    & \scalea{0.274} & \scalea{0.340} 
    & \scalea{0.511} & \scalea{0.423} \\
    \bottomrule
  \end{tabular}
  \end{threeparttable}
\end{table}

We present a detailed comparison of the multi-step forecasting task using the AR-F paradigm in Table~\ref{tab:arf}, where iTransformer serves as both the base and single-step model. Overall, the AR-F paradigm demonstrates a significant issue with error accumulation as the forecasting horizon increases. This effect is particularly noticeable in datasets like PEMS08, where, for example, the MSE dramatically increases as the forecasting length extends from 12 to 48 time steps. 

This behavior underscores the limitations of the AR-F paradigm for long-term forecasting tasks. As the model is required to recursively predict future values based on previous predictions, the compounding effect of errors leads to substantially degraded performance over longer horizons. This observation highlights the need for more robust solutions, such as our MoLA paradigm, which mitigates the error accumulation problem in AR-F and expressiveness bottleneck in MT-F by adopting a more modularized and horizon-specific fine-tuning approach, allowing for more stable and accurate long-term forecasts.

\subsection{Generalization Studies}
\label{apdx:adapt}

\begin{wrapfigure}{r}{0.5\linewidth}
\vspace{-6mm}
\begin{center}
\includegraphics[width=\linewidth]{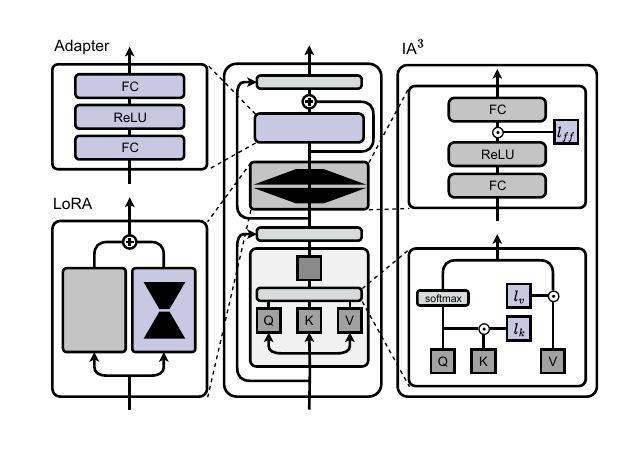}
\caption{Visualization of common parameter-efficient fine-tuning strategies.}
\label{fig:peft}
\end{center}
\vspace{-4mm}
\end{wrapfigure}

We visualize the three prominent PEFT methods: Adapter, LoRA and IA$^3$ in Figure~\ref{fig:peft}, showing the architectural modifications introduced by each PEFT technique within a typical transformer block.
Adapter, shown in the upper left, introduces additional fully connected (FC) layers and a short-cut connection after the FFN layer. The Adapter module typically consists of a down-projection FC layer, followed by a non-linearity (often ReLU), an up-projection FC layer, and a residual connection. This approach provides a compact, trainable module that can adjust the model's behavior for specific tasks without modifying the entire network.
LoRA, illustrated on the lower left, modifies the FFN layer by adding low-rank matrices to the frozen weight matrix. LoRA decomposes the weight update into two low-rank matrices, enabling the model to learn task-specific adaptations in a parameter-efficient manner.
IA$^3$, shown on the right, works by applying learnable scaling factors to the key and value projections in the self-attention mechanism and to the hidden representations in the feed-forward network, allowing for fine-grained control over the model's behavior with minimal additional parameters.

For a fair comparison, we matched the low-rank configuration of the Adapter and the LoRA modules with that of the LoRA expert in MoLA, applying both to the feed-forward network (FFN) layers. IA$^3$, on the other hand, dynamically adjusts the weights of intermediate hidden vectors in both the FFN and attention layers.

\subsection{Hyperparameter Sensitivity}
\label{apdx:sensi}
In this section, we further analyze the sensitivity of MoLA to three key hyperparameters under different segment number settings: the low-rank value $r$ in the LoRA modules, the learning rate $\eta$, and the number of LoRA expert modules $\mP$. The experiments are conducted using iTransformer across two datasets, ETTh1 and Weather, with the results visualized in Figure~\ref{fig:sensi-itrans-etth1} and Figure~\ref{fig:sensi-itrans-weather}.

The results generally indicate that both extremely low and high values of rank $r$ and learning rate $\eta$ lead to performance degradation. This pattern suggests that overly high ranks may lead to overfitting, while excessively low ranks may not provide sufficient flexibility for effective model adaptation. Similarly, very high learning rates can cause instability in training, while very low learning rates may result in slow convergence or getting stuck in suboptimal solutions.
Interestingly, for the number of expert modules $\mP$, MoLA exhibits high stability as long as $\mP$ is not significantly lower than the number of segments $\K$. This robustness indicates that the model can effectively leverage multiple experts to capture diverse patterns across different forecasting segments.

These findings indicate that fine-tuning the hyperparameters for each specific forecasting horizon can further improve performance. However, our results demonstrate that even without exhaustive tuning, the MoLA paradigm delivers robust performance improvements, highlighting its flexibility and effectiveness in time-series forecasting tasks.

\begin{figure}
\subfigure[Varying $r$ on ETTh1.]{
    \includegraphics[width=0.245\linewidth]{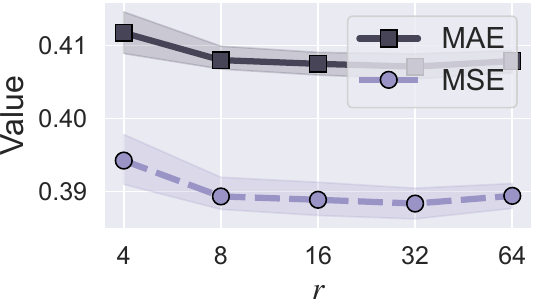}
    \hfill
    \includegraphics[width=0.245\linewidth]{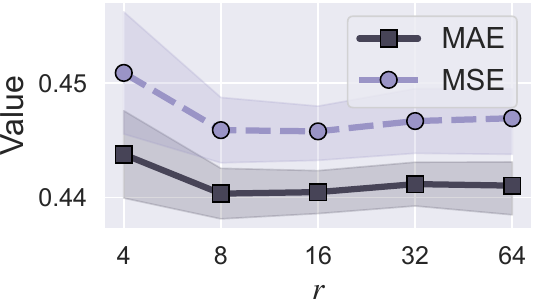}
    \hfill
    \includegraphics[width=0.245\linewidth]{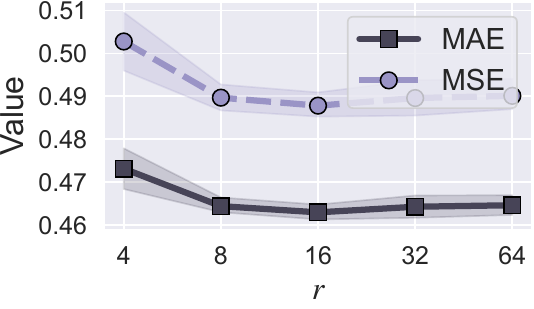}
    \hfill
    \includegraphics[width=0.245\linewidth]{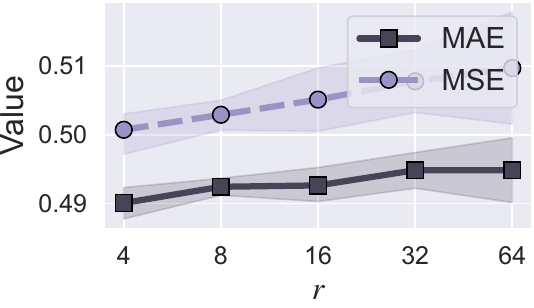}
}

\subfigure[Varying $\eta$ on ETTh1.]{
    \includegraphics[width=0.245\linewidth]{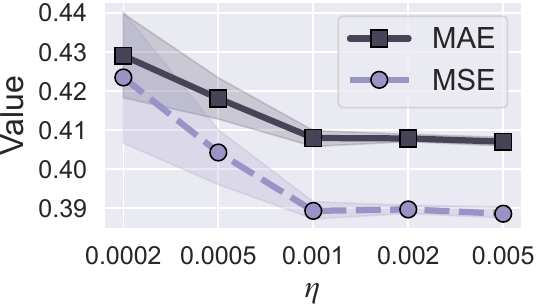}
    \hfill
    \includegraphics[width=0.245\linewidth]{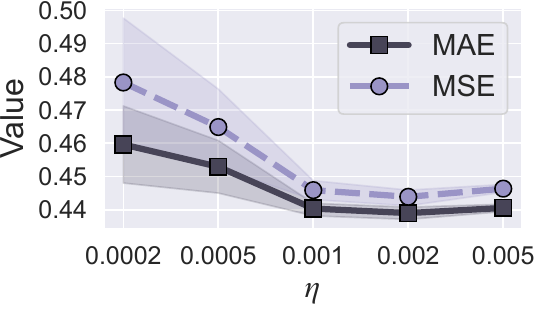}
    \hfill
    \includegraphics[width=0.245\linewidth]{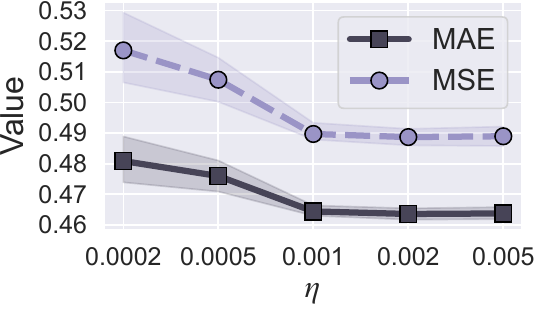}
    \hfill
    \includegraphics[width=0.245\linewidth]{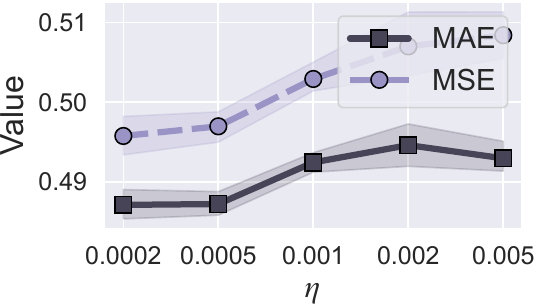}
}

\subfigure[Varying $\mP$ on ETTh1.]{
    \includegraphics[width=0.245\linewidth]{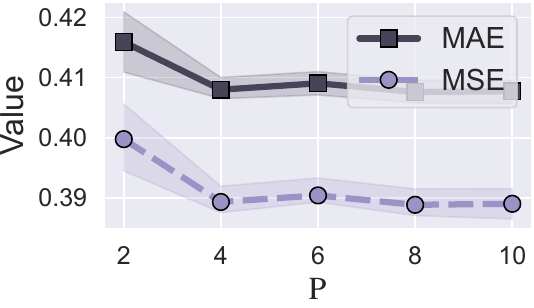}
    \hfill
    \includegraphics[width=0.245\linewidth]{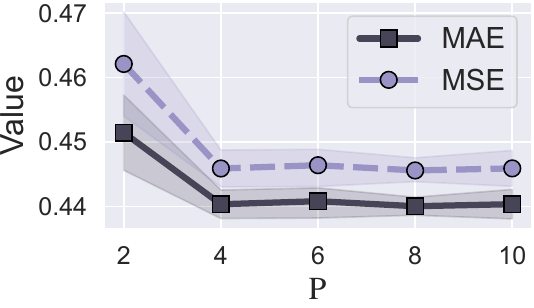}
    \hfill
    \includegraphics[width=0.245\linewidth]{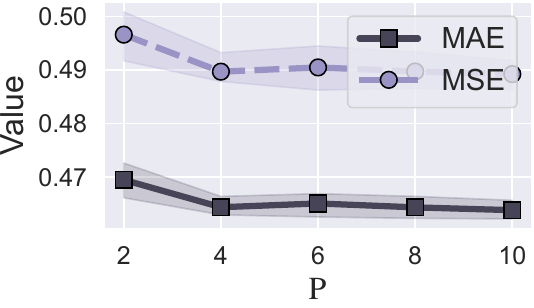}
    \hfill
    \includegraphics[width=0.245\linewidth]{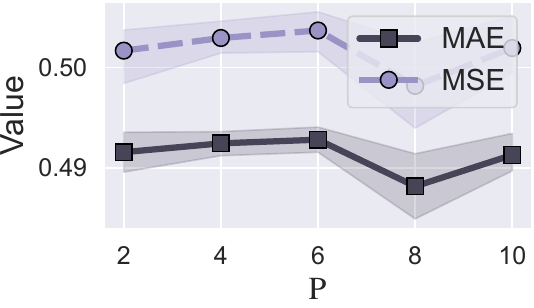}
}
\caption{Performance of iTransformer enhanced by MoLA given different low rank of LoRA modules $r$, learning rate $\eta$ and the number of experts $\mP$. Different columns correspond to different number of forecasting length $\T$ (from left to right: 96, 192, 336, 720). The results are averaged on four forecasting segment number (2, 3, 4, 6) with shaded areas being 50\% confidence intervals.}
\label{fig:sensi-itrans-etth1}
\end{figure}

\begin{figure}
\subfigure[Varying $r$ on Weather.]{
    \includegraphics[width=0.245\linewidth]{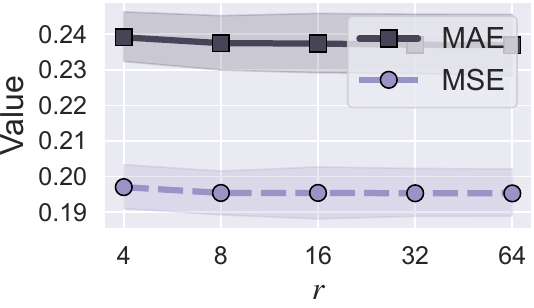}
    \hfill
    \includegraphics[width=0.245\linewidth]{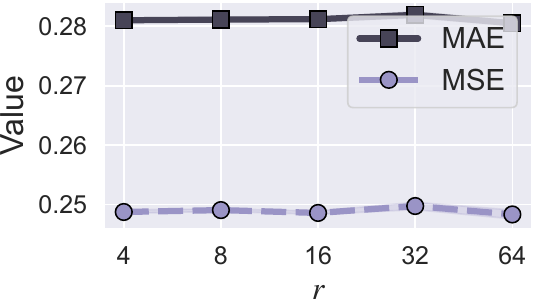}
    \hfill
    \includegraphics[width=0.245\linewidth]{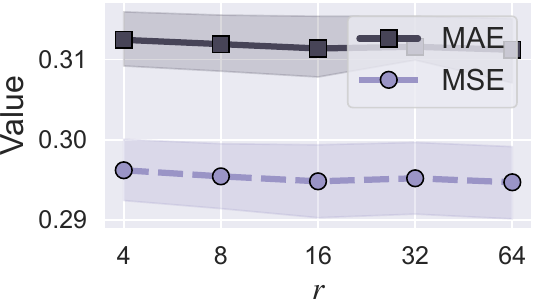}
    \hfill
    \includegraphics[width=0.245\linewidth]{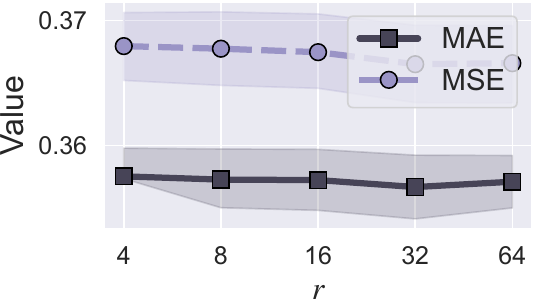}
}

\subfigure[Varying $\eta$ on Weather.]{
    \includegraphics[width=0.245\linewidth]{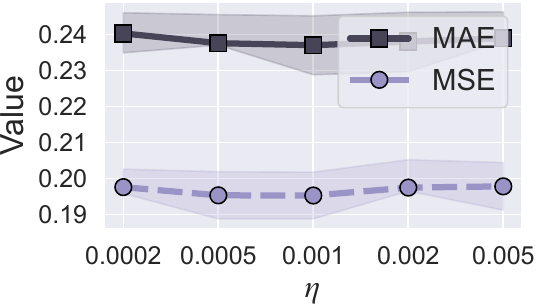}
    \hfill
    \includegraphics[width=0.245\linewidth]{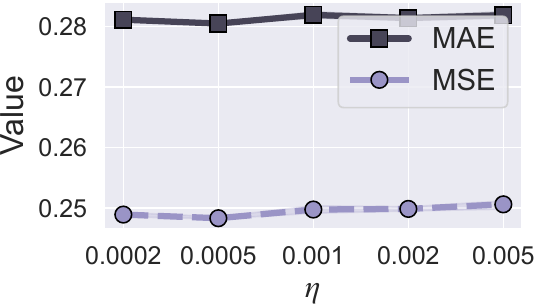}
    \hfill
    \includegraphics[width=0.245\linewidth]{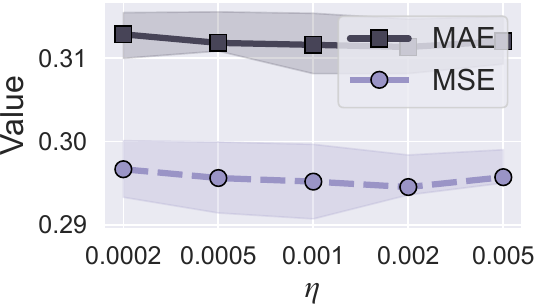}
    \hfill
    \includegraphics[width=0.245\linewidth]{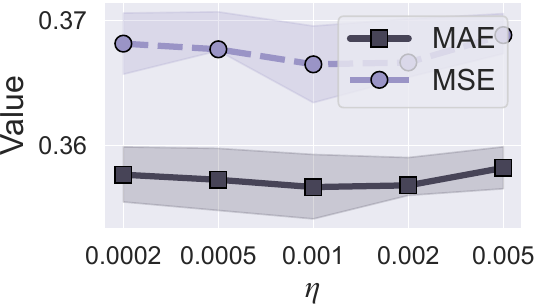}
}

\subfigure[Varying $\mP$ on Weather.]{
    \includegraphics[width=0.245\linewidth]{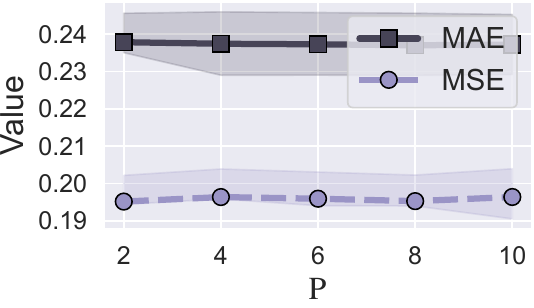}
    \hfill
    \includegraphics[width=0.245\linewidth]{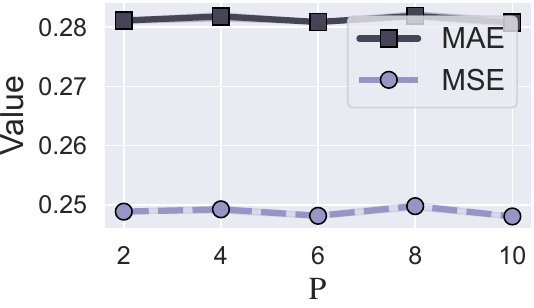}
    \hfill
    \includegraphics[width=0.245\linewidth]{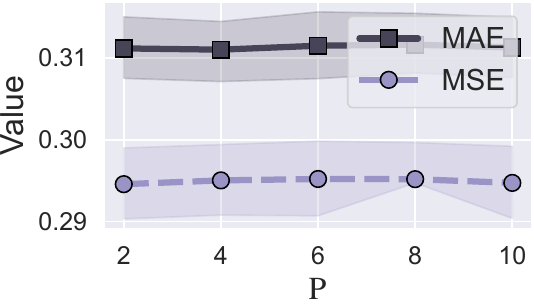}
    \hfill
    \includegraphics[width=0.245\linewidth]{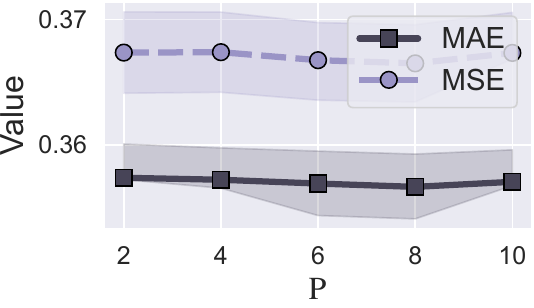}
}
\caption{Performance of iTransformer enhanced by MoLA given different low rank of LoRA modules $r$, learning rate $\eta$ and the number of experts $\mP$. Different columns correspond to different number of forecasting length $\T$ (from left to right: 96, 192, 336, 720). The results are averaged on four forecasting segment number (2, 3, 4, 6) with shaded areas being 50\% confidence intervals. }
\label{fig:sensi-itrans-weather}
\end{figure}

\subsection{Extra Discussions}
\label{apdx:extra}

\begin{table}
  \caption{Full results on the multi-step forecasting task with jointly fine-tuning weight and bias under MF paradigm. The length of history window is set to 96 for all baselines. \emph{Avg} indicates the results averaged over forecasting lengths: T=96, 192, 336 and 720 for ETT, ECL, Traffic and Weather dataset, T=12,24,36 and 48 for PEMS datasets.}\label{tab:bias_full}
  \centering
  \begin{threeparttable}
  \renewcommand{\arraystretch}{0.8}
  \setlength{\tabcolsep}{2pt}
  \scriptsize
  \renewcommand{\multirowsetup}{\centering}
  \begin{tabular}{c|c|cc|cc|cc|cc|cc|cc|cc|c|cc|cc}
    \toprule
    \multicolumn{2}{l}{\rotatebox{0}{\scaleb{Datasets}}} & 
    \multicolumn{2}{c}{\rotatebox{0}{\scaleb{ETTm1}}} &
    \multicolumn{2}{c}{\rotatebox{0}{\scaleb{ETTm2}}} &
    \multicolumn{2}{c}{\rotatebox{0}{\scaleb{ETTh1}}} &
    \multicolumn{2}{c}{\rotatebox{0}{\scaleb{ETTh2}}} &
    \multicolumn{2}{c}{\rotatebox{0}{\scaleb{ECL}}} &
    \multicolumn{2}{c}{\rotatebox{0}{\scaleb{Traffic}}} &
    \multicolumn{2}{c}{\rotatebox{0}{\scaleb{Weather}}} &
    \multicolumn{1}{c}{} &
    \multicolumn{2}{c}{\rotatebox{0}{\scaleb{PEMS03}}} &
    \multicolumn{2}{c}{\rotatebox{0}{\scaleb{PEMS08}}} \\
    \cmidrule(lr){3-4} \cmidrule(lr){5-6}\cmidrule(lr){7-8} \cmidrule(lr){9-10}\cmidrule(lr){11-12} \cmidrule(lr){13-14} \cmidrule(lr){15-16} \cmidrule(lr){18-19} \cmidrule(lr){20-21}
    \multicolumn{2}{l}{\rotatebox{0}{\scaleb{Metrics}}}  & \scalea{MSE} & \scalea{MAE}  & \scalea{MSE} & \scalea{MAE}  & \scalea{MSE} & \scalea{MAE}  & \scalea{MSE} & \scalea{MAE}  & \scalea{MSE} & \scalea{MAE}  & \scalea{MSE} & \scalea{MAE} & \scalea{MSE} & \multicolumn{1}{c}{\scalea{MAE}} & \multicolumn{1}{c}{} & \scalea{MSE} & \scalea{MAE} & \scalea{MSE} & \scalea{MAE} \\
    \midrule

    \multirow{5}{*}{{\rotatebox{90}{\scalebox{0.95}{MoLA-WB}}}}
    & \scalea{96}
    & \scalea{0.337} & \scalea{0.368}
    & \scalea{0.182} & \scalea{0.263}
    & \scalea{0.378} & \scalea{0.400}
    & \scalea{0.297} & \scalea{0.345}
    & \scalea{0.147} & \scalea{0.237}
    & \scalea{0.394} & \scalea{0.270}
    & \scalea{0.201} & \scalea{0.245}
    & \scalea{12}
    & \scalea{0.070} & \scalea{0.175}
    & \scalea{0.079} & \scalea{0.181} \\
    
    & \scalea{192} 
    & \scalea{0.382} & \scalea{0.394} 
    & \scalea{0.246} & \scalea{0.306} 
    & \scalea{0.434} & \scalea{0.431} 
    & \scalea{0.379} & \scalea{0.398} 
    & \scalea{0.163} & \scalea{0.254} 
    & \scalea{0.420} & \scalea{0.282} 
    & \scalea{0.246} & \scalea{0.279} 
    & \scalea{24}
    & \scalea{0.095} & \scalea{0.205} 
    & \scalea{0.114} & \scalea{0.218} \\
    
    & \scalea{336} 
    & \scalea{0.419} & \scalea{0.420} 
    & \scalea{0.315} & \scalea{0.350} 
    & \scalea{0.477} & \scalea{0.456} 
    & \scalea{0.420} & \scalea{0.431} 
    & \scalea{0.182} & \scalea{0.273} 
    & \scalea{0.640} & \scalea{0.426} 
    & \scalea{0.280} & \scalea{0.297} 
    & \scalea{36}
    & \scalea{0.127} & \scalea{0.239} 
    & \scalea{0.158} & \scalea{0.256} \\
    
    & \scalea{720} 
    & \scalea{0.483} & \scalea{0.456} 
    & \scalea{0.415} & \scalea{0.407} 
    & \scalea{0.495} & \scalea{0.488} 
    & \scalea{0.432} & \scalea{0.448} 
    & \scalea{0.211} & \scalea{0.300} 
    & \scalea{0.585} & \scalea{0.352} 
    & \scalea{0.367} & \scalea{0.356} 
    & \scalea{48}
    & \scalea{0.158} & \scalea{0.269} 
    & \scalea{0.202} & \scalea{0.290} \\
    \cmidrule(lr){2-21}
    
    & \scalea{Avg} 
    & \scalea{0.405} & \scalea{0.409} 
    & \scalea{0.289} & \scalea{0.332} 
    & \scalea{0.446} & \scalea{0.444} 
    & \scalea{0.382} & \scalea{0.405} 
    & \scalea{0.176} & \scalea{0.266} 
    & \scalea{0.549} & \scalea{0.353} 
    & \scalea{0.274} & \scalea{0.294} 
    & \scalea{Avg}
    & \scalea{0.112} & \scalea{0.222} 
    & \scalea{0.138} & \scalea{0.236} \\
    \bottomrule
  \end{tabular}
  \end{threeparttable}
\end{table}

\paragraph{Discussion on fine-tuning bias.} 
In this section, we extend our analysis by evaluating the impact of jointly fine-tuning both the weight and bias within the LoRA modules under the MoLA paradigm. The results, detailed in Table~\ref{tab:bias_full}, cover a variety of datasets.

Overall, the findings indicate that the joint fine-tuning of weights and biases (MoLA-WB) results in performance degradation compared to fine-tuning weights alone under the MoLA paradigm. Specifically, MoLA-WB tends to increase the risk of overfitting, particularly in datasets with complex temporal dependencies, where the added non-low-rank parameters introduce excessive flexibility, making the model more difficult to optimize effectively. This is especially evident in the larger forecasting horizons, where the performance gap becomes more pronounced.

However, MoLA-WB generally improves the performance of the MT-F paradigm, suggesting that while the joint fine-tuning strategy introduces challenges for the MoLA framework, it may still offer some benefit in simpler or less modular fine-tuning strategies. 
The overall results affirm that focusing on weight fine-tuning alone is a more effective approach for leveraging the full potential of MoLA, ensuring better generalization and predictive accuracy across diverse forecasting tasks.

\paragraph{Discussion on Layer-wise Fine-tuning.}
In this section, we explore the impact of layer-wise fine-tuning in the multi-layer iTransformer model across various datasets. Specifically, for the ETT dataset, we adaptd the first (MoLA-1) and second (MoLA-2) layers, while for the ECL, Traffic, Weather, and PEMS datasets, we extended the study to include the third layer (MoLA-3). The results are detailed in Tables~\ref{tab:layer1} and Table~\ref{tab:layer2}.

From the experiments, we observe that layer-wise fine-tuning generally yields positive performance improvements over the MT-F paradigm in most cases, particularly for datasets like ETTm1, ETTm2, Weather, and ECL. In these datasets, fine-tuning a single layer was sufficient to capture the temporal dependencies effectively, leading to reduced MSE and MAE. For instance, in the ETT datasets, both MoLA-1 and MoLA-2 show competitive results compared to full-layer fine-tuning, indicating that focusing on specific layers can provide significant computational savings without sacrificing accuracy.

However, for more complex datasets such as Traffic and PEMS03, fine-tuning a single layer did not achieve results better than those obtained with the MT-F paradigm. This could be due to the disruption of key inter-layer interactions that are crucial for the hierarchical processing in Transformer-based models. These interactions are particularly important in datasets with complex temporal patterns or multiple variates, where adjustments in a single layer may not provide enough capacity to adapt to the nuances of the time-series data.

The results suggest that while layer-wise fine-tuning can be beneficial in reducing the computational overhead and maintaining high performance, it is dataset-dependent. In datasets with more complex structures, a more comprehensive fine-tuning strategy involving multiple layers or full-layer fine-tuning may be necessary to avoid underfitting and fully capture the temporal dependencies in the data.

\begin{table}
  \caption{Full results on the multi-step forecasting task with layer-wise fine-tuning under MoLA paradigm.}\label{tab:layer1}
  \centering
  \begin{threeparttable}
  \renewcommand{\arraystretch}{0.8} \setlength{\tabcolsep}{5.5pt} \scriptsize
  \renewcommand{\multirowsetup}{\centering}
  \begin{tabular}{c|c|cc|cc|cc|cc}
    \toprule
    \multicolumn{2}{l}{\rotatebox{0}{\scaleb{Datasets}}} & 
    \multicolumn{2}{c}{\rotatebox{0}{\scaleb{ETTm1}}} &
    \multicolumn{2}{c}{\rotatebox{0}{\scaleb{ETTm2}}} &
    \multicolumn{2}{c}{\rotatebox{0}{\scaleb{ETTh1}}} &
    \multicolumn{2}{c}{\rotatebox{0}{\scaleb{ETTh2}}} \\
    \cmidrule(lr){3-4} \cmidrule(lr){5-6}\cmidrule(lr){7-8} \cmidrule(lr){9-10}
    \multicolumn{2}{l}{\rotatebox{0}{\scaleb{Metrics}}}  & \scalea{MSE} & \scalea{MAE}  & \scalea{MSE} & \scalea{MAE}  & \scalea{MSE} & \scalea{MAE}  & \scalea{MSE} & \scalea{MAE} \\
    \midrule

    \multirow{5}{*}{{\rotatebox{90}{\scalebox{0.95}{MoLA-1}}}}
    & \scalea{96}
    & \scalea{0.342} & \scalea{0.373}
    & \scalea{0.183} & \scalea{0.263}
    & \scalea{0.379} & \scalea{0.400}
    & \scalea{0.293} & \scalea{0.342} \\
    
    & \scalea{192} 
    & \scalea{0.386} & \scalea{0.396} 
    & \scalea{0.251} & \scalea{0.310} 
    & \scalea{0.436} & \scalea{0.432} 
    & \scalea{0.380} & \scalea{0.397} \\
    
    & \scalea{336} 
    & \scalea{0.426} & \scalea{0.424} 
    & \scalea{0.315} & \scalea{0.352} 
    & \scalea{0.477} & \scalea{0.456} 
    & \scalea{0.428} & \scalea{0.434}  \\
    
    & \scalea{720} 
    & \scalea{0.495} & \scalea{0.462} 
    & \scalea{0.412} & \scalea{0.406} 
    & \scalea{0.486} & \scalea{0.481} 
    & \scalea{0.428} & \scalea{0.445}  \\
    \cmidrule(lr){2-10}
    
    & \scalea{Avg} 
    & \scalea{0.412} & \scalea{0.414} 
    & \scalea{0.290} & \scalea{0.333} 
    & \scalea{0.444} & \scalea{0.442} 
    & \scalea{0.382} & \scalea{0.404} \\
    \midrule
    
    \multirow{5}{*}{{\rotatebox{90}{\scalebox{0.95}{MoLA-2}}}}
    & \scalea{96}
    & \scalea{0.335} & \scalea{0.369}
    & \scalea{0.181} & \scalea{0.262}
    & \scalea{0.377} & \scalea{0.398}
    & \scalea{0.296} & \scalea{0.343} \\
    
    & \scalea{192} 
    & \scalea{0.382} & \scalea{0.395} 
    & \scalea{0.246} & \scalea{0.306} 
    & \scalea{0.434} & \scalea{0.431} 
    & \scalea{0.384} & \scalea{0.402} \\
    
    & \scalea{336} 
    & \scalea{0.414} & \scalea{0.418} 
    & \scalea{0.308} & \scalea{0.344} 
    & \scalea{0.480} & \scalea{0.459} 
    & \scalea{0.432} & \scalea{0.438}  \\
    
    & \scalea{720} 
    & \scalea{0.480} & \scalea{0.454} 
    & \scalea{0.411} & \scalea{0.405} 
    & \scalea{0.498} & \scalea{0.490} 
    & \scalea{0.430} & \scalea{0.445}  \\
    \cmidrule(lr){2-10}
    
    & \scalea{Avg} 
    & \scalea{0.403} & \scalea{0.409} 
    & \scalea{0.286} & \scalea{0.329} 
    & \scalea{0.447} & \scalea{0.445} 
    & \scalea{0.385} & \scalea{0.407} \\
    \bottomrule
  \end{tabular}
  \end{threeparttable}
\end{table}

\begin{table}
  \caption{Full results on the multi-step forecasting task with layer-wise fine-tuning under MF paradigm.}\label{tab:layer2}
  \centering
  \begin{threeparttable}
  \renewcommand{\arraystretch}{0.8}
  \setlength{\tabcolsep}{5.5pt}
  \scriptsize
  \renewcommand{\multirowsetup}{\centering}
  \begin{tabular}{c|c|cc|cc|cc|c|cc|cc}
    \toprule
    \multicolumn{2}{l}{\rotatebox{0}{\scaleb{Datasets}}} & 
    \multicolumn{2}{c}{\rotatebox{0}{\scaleb{ECL}}} &
    \multicolumn{2}{c}{\rotatebox{0}{\scaleb{Traffic}}} &
    \multicolumn{2}{c}{\rotatebox{0}{\scaleb{Weather}}} &
    \multicolumn{1}{c}{} &
    \multicolumn{2}{c}{\rotatebox{0}{\scaleb{PEMS03}}} &
    \multicolumn{2}{c}{\rotatebox{0}{\scaleb{PEMS08}}} \\
    \cmidrule(lr){3-4} \cmidrule(lr){5-6}\cmidrule(lr){7-8} \cmidrule(lr){10-11} \cmidrule(lr){12-13}
    \multicolumn{2}{l}{\rotatebox{0}{\scaleb{Metrics}}}  & \scalea{MSE} & \scalea{MAE}  & \scalea{MSE} & \scalea{MAE}  & \scalea{MSE} & \multicolumn{1}{c}{\scalea{MAE}}  & \multicolumn{1}{c}{} & \scalea{MSE} & \scalea{MAE}  & \scalea{MSE} & \scalea{MAE} \\
    \midrule

    \multirow{5}{*}{{\rotatebox{90}{\scalebox{0.95}{MoLA-1}}}}
    & \scalea{96}
    & \scalea{0.149} & \scalea{0.239}
    & \scalea{0.401} & \scalea{0.275}
    & \scalea{0.203} & \scalea{0.245}
    & \scalea{12}
    & \scalea{0.069} & \scalea{0.175}
    & \scalea{0.081} & \scalea{0.185} \\
    
    & \scalea{192} 
    & \scalea{0.164} & \scalea{0.255} 
    & \scalea{0.425} & \scalea{0.286} 
    & \scalea{0.250} & \scalea{0.281} 
    & \scalea{24}
    & \scalea{0.099} & \scalea{0.210} 
    & \scalea{0.123} & \scalea{0.228} \\
    
    & \scalea{336} 
    & \scalea{0.182} & \scalea{0.274} 
    & \scalea{0.432} & \scalea{0.286} 
    & \scalea{0.280} & \scalea{0.298} 
    & \scalea{36}
    & \scalea{0.136} & \scalea{0.249} 
    & \scalea{0.173} & \scalea{0.271} \\
    
    & \scalea{720} 
    & \scalea{0.215} & \scalea{0.301} 
    & \scalea{0.480} & \scalea{0.320} 
    & \scalea{0.368} & \scalea{0.358} 
    & \scalea{48}
    & \scalea{0.172} & \scalea{0.284} 
    & \scalea{0.222} & \scalea{0.309} \\
    \cmidrule(lr){2-13}
    
    & \scalea{Avg} 
    & \scalea{0.177} & \scalea{0.267} 
    & \scalea{0.435} & \scalea{0.292} 
    & \scalea{0.275} & \scalea{0.296} 
    & \scalea{Avg}
    & \scalea{0.119} & \scalea{0.229} 
    & \scalea{0.150} & \scalea{0.249} \\
    \midrule

    \multirow{5}{*}{{\rotatebox{90}{\scalebox{0.95}{MoLA-2}}}}
    & \scalea{96}
    & \scalea{0.149} & \scalea{0.240}
    & \scalea{0.401} & \scalea{0.276}
    & \scalea{0.203} & \scalea{0.246}
    & \scalea{12}
    & \scalea{0.069} & \scalea{0.175}
    & \scalea{0.082} & \scalea{0.185} \\
    
    & \scalea{192} 
    & \scalea{0.163} & \scalea{0.254} 
    & \scalea{0.423} & \scalea{0.285} 
    & \scalea{0.248} & \scalea{0.280} 
    & \scalea{24}
    & \scalea{0.099} & \scalea{0.209} 
    & \scalea{0.122} & \scalea{0.228} \\
    
    & \scalea{336} 
    & \scalea{0.182} & \scalea{0.274} 
    & \scalea{0.431} & \scalea{0.286} 
    & \scalea{0.281} & \scalea{0.298} 
    & \scalea{36}
    & \scalea{0.135} & \scalea{0.248} 
    & \scalea{0.174} & \scalea{0.272} \\
    
    & \scalea{720} 
    & \scalea{0.212} & \scalea{0.299} 
    & \scalea{0.479} & \scalea{0.319} 
    & \scalea{0.369} & \scalea{0.358} 
    & \scalea{48}
    & \scalea{0.171} & \scalea{0.281} 
    & \scalea{0.218} & \scalea{0.305} \\
    \cmidrule(lr){2-13}
    
    & \scalea{Avg} 
    & \scalea{0.176} & \scalea{0.267} 
    & \scalea{0.433} & \scalea{0.292} 
    & \scalea{0.275} & \scalea{0.295} 
    & \scalea{Avg}
    & \scalea{0.118} & \scalea{0.228} 
    & \scalea{0.149} & \scalea{0.247} \\
    \midrule

    \multirow{5}{*}{{\rotatebox{90}{\scalebox{0.95}{MoLA-3}}}}
    & \scalea{96}
    & \scalea{0.150} & \scalea{0.240}
    & \scalea{0.402} & \scalea{0.276}
    & \scalea{0.201} & \scalea{0.245}
    & \scalea{12}
    & \scalea{0.069} & \scalea{0.176}
    & \scalea{0.083} & \scalea{0.186} \\
    
    & \scalea{192} 
    & \scalea{0.162} & \scalea{0.254} 
    & \scalea{0.424} & \scalea{0.287} 
    & \scalea{0.249} & \scalea{0.280} 
    & \scalea{24}
    & \scalea{0.099} & \scalea{0.210} 
    & \scalea{0.125} & \scalea{0.230} \\
    
    & \scalea{336} 
    & \scalea{0.180} & \scalea{0.272} 
    & \scalea{0.431} & \scalea{0.286} 
    & \scalea{0.280} & \scalea{0.297} 
    & \scalea{36}
    & \scalea{0.136} & \scalea{0.249} 
    & \scalea{0.175} & \scalea{0.272} \\
    
    & \scalea{720} 
    & \scalea{0.210} & \scalea{0.297} 
    & \scalea{0.481} & \scalea{0.319} 
    & \scalea{0.369} & \scalea{0.358} 
    & \scalea{48}
    & \scalea{0.171} & \scalea{0.282} 
    & \scalea{0.219} & \scalea{0.305} \\
    \cmidrule(lr){2-13}
    
    & \scalea{Avg} 
    & \scalea{0.176} & \scalea{0.266} 
    & \scalea{0.435} & \scalea{0.292} 
    & \scalea{0.275} & \scalea{0.295} 
    & \scalea{Avg}
    & \scalea{0.119} & \scalea{0.229} 
    & \scalea{0.150} & \scalea{0.248} \\
    \bottomrule
  \end{tabular}
  \end{threeparttable}
\end{table}

\end{document}